\crefname{hypothesis}{Hypothesis}{Hypotheses}
\setlist[enumerate]{leftmargin=.5in}
\setlist[itemize]{leftmargin=.5in}
\Crefname{ALC@unique}{Line}{Lines}
\colorlet{texcscolor}{blue!50!black}
\colorlet{texemcolor}{red!70!black}
\colorlet{texpreamble}{red!70!black}
\colorlet{codebackground}{black!25!white!25}
\lstdefinestyle{siamlatex}{%
  style=tcblatex,
  texcsstyle=*\color{texcscolor},
  texcsstyle=[2]\color{texemcolor},
  keywordstyle=[2]\color{texemcolor},
  moretexcs={cref,Cref,maketitle,mathcal,text,headers,email,url},
}
\DeclareTotalTCBox{\code}{ v O{} }
{ %fontupper=\ttfamily\color{texemcolor},
  fontupper=\ttfamily\color{black},
  nobeforeafter,
  tcbox raise base,
  colback=codebackground,colframe=white,
  top=0pt,bottom=0pt,left=0mm,right=0mm,
  leftrule=0pt,rightrule=0pt,toprule=0mm,bottomrule=0mm,
  boxsep=0.5mm,
  #2}{#1}
\patchcmd\newpage{\vfil}{}{}{}
\newtheorem{assumption}[theorem]{Assumption}
\newcommand{\e}{{\mathrm e}}
\renewcommand{\i}{{\mathrm i}}
\newcommand{\dx}{{\,\mathrm dx}}
\newcommand{\dbmx}{{\,\mathrm d\bm x}}
\renewcommand\intercal{\mathsf{\scriptscriptstyle T}}
\title{Preconditioned Additive Gaussian Processes with Fourier Acceleration\thanks{Submitted to the editors DATE.
\funding{Research of Y. Xi is supported by NSF DMS-2338904.}}}
\author{Theresa Wagner\thanks{Department of Mathematics, Chemnitz University of Technology, Germany (\email{theresa.wagner@math.tu-chemnitz.de},\email{franziska.nestler@math.tu-chemnitz.de},\email{martin.stoll@math.tu-chemnitz.de})}
\and Tianshi Xu\thanks{Department of Mathematics, Emory University, Atlanta, GA (\email{tianshi.xu@emory.edu}, \email{yuanzhe.xi@emory.edu}).}
\and Franziska Nestler\footnotemark[2] \and Yuanzhe Xi\footnotemark[3] \and Martin Stoll\footnotemark[2]}
\begin{document}
\maketitle

%% ------------------------------------------------------------------
%% ABSTRACT
%% ------------------------------------------------------------------
\begin{tcbverbatimwrite}{tmp_\jobname_abstract.tex}
\begin{abstract}
Gaussian processes (GPs) are crucial in machine learning for quantifying uncertainty in predictions. However, their associated covariance matrices, defined by kernel functions, are typically dense and large-scale, posing significant computational challenges. This paper introduces a matrix-free method that utilizes the Non-equispaced Fast Fourier Transform (NFFT) to achieve nearly linear complexity in the multiplication of kernel matrices and their derivatives with vectors for a predetermined accuracy level. To address high-dimensional problems, we propose an additive kernel approach. Each sub-kernel in this approach captures lower-order feature interactions, allowing for the efficient application of the NFFT method and potentially increasing accuracy across various real-world datasets. Additionally, we implement a preconditioning strategy that accelerates hyperparameter tuning, further improving the efficiency and effectiveness of GPs.
\end{abstract}

\begin{keywords}
Gaussian process, additive kernel, NFFT, preconditioning, error analysis
\end{keywords}

\begin{MSCcodes}
65C60, 65D15, 65F10, 65T50  
\end{MSCcodes}
\end{tcbverbatimwrite}
\input{tmp_\jobname_abstract.tex}
%% ------------------------------------------------------------------
%% END HEADER
%% ------------------------------------------------------------------
\section{Introduction}\label{sec:intro}
\textit{Gaussian processes} (GPs) model distributions over function evaluations \(\phi(\bm{x})\), characterized by a mean function \(m(\bm{x})\) and a covariance function \(\kappa: \mathbb R^p\times\mathbb R^p\to\mathbb R\)~\cite{williams2006gaussian}. More specifically, GPs assume that the function values \(\phi(\bm{x}_j)\) at any finite collection of points \(\mathcal{X} = \{\bm{x}_1, \ldots, \bm{x}_n\}\subset\mathbb R^p\) follow a Gaussian distribution, represented as \(\rho(\bm{\phi} | \mathcal{X}) = \mathcal{N}(\bm{\phi} | \bm{\mu}, K)\).
Here, \(\bm{\phi} = [\phi(\bm{x}_1), \ldots, \phi(\bm{x}_n)]^\intercal\), \(\bm{\mu} = [m(\bm{x}_1), \ldots, m(\bm{x}_n)]^\intercal\), and \(K\) is the covariance matrix with elements \(K_{ij} = \kappa(\bm{x}_i, \bm{x}_j)\) for \(i, j = 1, \ldots, n\). Assuming the observed outcomes \(Y\) are expressed as \(Y = \bm{\phi} + \bm{\varepsilon}\), with \(\bm{\varepsilon}\) being Gaussian noise characterized by covariance \(\sigma_\varepsilon^2I\), then the conditional distribution of \(Y\) given \(\bm{\phi}\) is modeled as \(Y | \bm{\phi} \sim \mathcal{N}(\bm{\phi}, \sigma_\varepsilon^2I)\).

Two widely used covariance functions include the Gaussian or RBF kernel $\kappa^{\text{g}}$ and the Mat\'ern\texorpdfstring{\((\frac{1}{2})\)}{} kernel $\kappa^{\text{m}} $, defined as follows:
\begin{align} \label{eq:kernel_definition}
    \kappa^{\text{g}} (\bm{x}_i,\bm{x}_j) =  \sigma_f^2 \exp \left( -\tfrac{\|\bm{x}_i-\bm{x}_j\|_2^2}{2\ell^2} \right), \quad \kappa^{\text{m}} (\bm{x}_i,\bm{x}_j) =  \sigma_f^2 \exp \left( -\tfrac{\|\bm{x}_i-\bm{x}_j\|_2}{\ell} \right),
\end{align}
where \(\ell > 0\) represents the length scale, and \(\sigma_f > 0\) indicates the prior variance. Consequently, the GP model hyperparameters, denoted by \(\bm{\theta} = (\sigma_f, \ell, \sigma_\varepsilon)\), encompass the variance scales and the noise level, central to defining the behavior of these kernels. Given that both kernels are shift-invariant, we simplify the notation by representing them with a single input in the analysis:
\begin{equation*}
  \kappa(\bm x-\bm y)  \coloneqq  \kappa(\bm x,\bm y).
\end{equation*}
The GP objective function, aimed at determining the optimal hyperparameters \(\bm{\theta}\), is the negative log marginal likelihood:
\begin{align} \label{eq:GP_objective}
    Z(\bm{\theta}) = -\log \rho(Y | \mathcal{X}) = \tfrac{1}{2} \left(Y^\intercal \hat{K}^{-1} Y + \log(\det(\hat{K})) + n \log(2 \pi) \right),
\end{align}
where \(\hat{K} = K + \sigma_\varepsilon^2I\) is referred to as the regularized kernel matrix. Given the large size of the kernel matrix \(K\), direct computations of \(Y^\intercal \hat{K}^{-1} Y\) and \(\log(\det(\hat{K}))\) are considered infeasible for evaluating \(Z(\bm{\theta})\). Thus, iterative methods are invoked, where the preconditioned Conjugate Gradient (CG) algorithm is used to approximate \(\hat{K}^{-1} Y\)~\cite{williams2006gaussian}, and the Hutchinson trace estimator is employed to approximate \(\log(\det(\hat{K}))\) via
\begin{align*}
\log(\det(\hat{K})) = \operatorname{tr}(\text{logm}(\hat{K})) \approx \frac{1}{n_z} \sum_{i=1}^{n_z} \bm{z}_i^\intercal \text{logm}(\hat{K}) \bm{z}_i,
\end{align*}
where \(\text{logm}\) denotes the matrix logarithm and \(\bm{z}_i \in \mathbb{R}^n\), \(i=1, \ldots, n_z\), are random Rademacher vectors~\cite{hutchinson1989stochastic}. The quadratic terms $\{\bm{z}_i^\intercal \text{logm}(\hat{K}) \bm{z}_i\}$ can be further approximated through the Lanczos algorithm, which is known as the \textit{stochastic Lanczos quadrature} (SLQ) method~\cite{ubaru2017fast}. 

If \(M\), an approximation of \(\hat{K}\), is available, then \(\log(\det(\hat{K}))\) can be decomposed as:
\begin{align} \label{eq:log_det_decomp}
    \log(\det(\hat{K})) = \log(\det(M)) +\operatorname{tr}{\underbrace{(\text{logm} (\hat{K})-\text{logm} (M))}_{=\Delta \log}}.
\end{align}
When \(\log(\det(M))\) is explicitly computable, the SLQ is only applied to estimate \(\operatorname{tr}(\Delta \log)\) in this case, which has been demonstrated to converge more rapidly when \(M\) is a good preconditioner for \(\hat{K}\)~\cite{wenger2022preconditioning}. Therefore, in this paper, we will optimize the GP hyperparameters \(\bm{\theta}\) by maximizing the following preconditioned approximate objective function \(\tilde{Z}(\bm{\theta})\):
\begin{align} \label{eq:GP_approx_objective}
    \tilde{Z}(\bm{\theta}) = \tfrac{1}{2} \left( Y^\intercal \hat{K}^{-1} Y + \log(\det(M)) + \tfrac{1}{n_z} \sum_{i=1}^{n_z} \bm{z}_i^\intercal \text{logm}(M^{-1}\hat{K}) \bm{z}_i + n \log (2 \pi) \right) \approx Z(\bm{\theta}),
\end{align}
and the choice of the preconditioner $M$ will be discussed in Section~\ref{subsec:Preconditioning}.

Optimizing \(\bm{\theta}\) using first-order optimization methods also requires computing the derivatives of \(\tilde{Z}(\bm{\theta})\). These derivatives can be further approximated as follows
\begin{equation}
\frac{\partial \tilde{Z}(\bm{\theta})}{\partial \theta_j} \approx \tfrac{1}{2} \left(-\bm{\alpha}^\intercal \frac{\partial \hat{K}}{\partial \theta_j} \bm{\alpha} + \operatorname{tr}(M^{-1}\frac{\partial M}{\partial \theta_j}) + \tfrac{1}{n_z} \sum_{i=1}^{n_z} \bm{z}_i^\intercal (M^{-1}\hat{K})^{-1} \frac{\partial (M^{-1}\hat{K})}{\partial \theta_j} \bm{z}_i\right),
\label{eq:div}
\end{equation}
where \(\bm{\alpha}\) is the solution to the linear system \(\hat{K} \bm{\alpha} = Y\) \cite{wenger2022preconditioning}. When employing the preconditioned CG and SLQ to approximate the objective function and its derivatives, the primary computational costs stem from matrix-vector multiplications with the kernel matrix and its derivatives. These operations form the main computational bottlenecks in the GP optimization process.

In this paper, we introduce a novel preconditioned additive GP model that leverages the Non-equispaced Fast Fourier Transform (NFFT) to accelerate the matrix-vector multiplication operations and utilizes preconditioning to improve the convergence of iterative methods for speeding up the GP hyperparameter optimization. The remaining sections are organized as follows: Section~\ref{sec:Additive_GPs} details the new additive GP model structure. Section~\ref{sec:Fourier_Accelerated} introduces the NFFT for accelerating kernel matrix-vector multiplications and its rigorous approximation error analysis is provided in Section~\ref{sec:Theoretical_Analysis}. Numerical examples illustrating the effectiveness of the proposed approach are presented in Section~\ref{sec:Experiments}, followed by concluding remarks in Section~\ref{sec:Conclusion}.

\section{Additive Gaussian Processes} \label{sec:Additive_GPs}

A core principle of GPs is that data points proximate in the input space typically yield similar outputs. By partitioning the high-dimensional feature space into smaller subspaces and applying an additive structure to these segments, one can enhance the relevance of distance and neighborhood relations \cite{durrande2011additive, durrande2012additive}. In recent years, there has been a growing interest in the additive kernel and multiple kernel learning for GPs \cite{durrande2011additive, durrande2012additive, duvenaud2011additive}. For instance, GP models have been effectively combined with Generalized Additive Models (GAMs) \cite{hastie1986generalized} as demonstrated in \cite{durrande2012additive}. This integration shows that additive models can accurately represent the additive characteristics of functions, even in cases where the underlying regression function is not inherently additive.
Similarly, the covariance kernel of GPs has been augmented with additivity as described in \cite{durrande2011additive}. In this enhancement, the response of the GAM simulator is approximated through a sum of univariate functions, which improves both the interpretability and predictive accuracy of the model.
Moreover, additive GP regression has been demonstrated to achieve \textit{near minimax-optimal error rates} for the additive function class \cite{jiang2021variable,yang2015minimax}. Finally, from a computational point of view, this approach facilitates the use of fast matrix-vector multiplication algorithms such as NFFT or hierarchical matrix methods \cite{ddh2,huang2020h2pack,huang2025higp}, taking advantage of the reduced dimensionality to optimize computational efficiency.

\subsection{Additive Kernels} \label{subsec:Additive_kernels}
In this paper, we consider an additive kernel structure given by \(K = \sigma_f^2(K_1 + \dots + K_P)\), where each \(K_s \in \mathbb{R}^{n \times n}\) for \(s = 1, \dots, P\) represents a distinct \textit{sub-kernel} without prior variance term and \(\sigma_f\) is a scaling factor applied uniformly across \(P\) sub-kernels. 
%Since we apply a scaling factor \(\sigma_f\) uniformly across \(P\) sub-kernels, we can merge it with the $\frac{1}{P}$ term. 
The kernel function characterizing the covariance matrix is defined additively as
\begin{align} \label{eq:additive_kernel}
    \kappa (\bm{x}_i, \bm{x}_j) = \sigma_f^2 \sum_{s=1}^P \kappa_s \left(\bm{x}_i^{\mathcal{W}_s}, \bm{x}_j^{\mathcal{W}_s} \right),
\end{align}
where \(\kappa_s\) defines the sub-kernels based on subsets of features determined by the index set  \(\mathcal{W}_s \subset \{1,2,\ldots, p\}\) when the data points \(\bm{x}_i\) belong to \(\mathbb{R}^{p}\). Here, each \(\mathcal{W}_s\) contains \(d_s\) indices, ensuring that \(\mathcal{W}_{s_1} \cap \mathcal{W}_{s_2} = \emptyset\) for \(s_1 \neq s_2\) and \(\bm{x}_i^{\mathcal{W}_s}\) only contains the features corresponding to the indices in $\mathcal{W}_s$. The total number of features used in the additive model satisfies \(\sum_{s=1}^P d_s \leq p\), promoting dimensionality reduction within the model. This reduction can be implemented by either selecting features based on a threshold \(\texttt{thres} > 0\) or a feature importance ratio \(d_{\text{ratio}} \in (0,1)\), which drops features scoring below \(\texttt{thres}\) or outside the top \(d_{\text{ratio}}\) proportion in a feature importance ranking algorithm. Further details and empirical studies on the impact of \(\texttt{thres} \) and \(d_{\text{ratio}}\) on model performance can be found in Section~\ref{subsec:GP_framework}. In the following, we refer to the set of feature indices $\mathcal{W}\coloneqq\left[\mathcal{W}_1, \dots,\mathcal{W}_P\right]$ as \textit{feature windows}. 

Since the sum of any two positive-definite kernels is also positive-definite \cite{murphy2023probabilistic}, the additive kernel in \eqref{eq:additive_kernel}, constructed using Gaussian or Mat\'ern\texorpdfstring{\((\frac{1}{2})\)}{} sub-kernels, is itself a valid positive-definite Mercer kernel. Moreover, as the definitions of the GP posterior mean and variance hold for any symmetric positive-definite kernel, they naturally extend to additive kernels. Consequently, we define the \textit{additive Gaussian kernel} and the \textit{additive Mat\' ern\texorpdfstring{$(\frac{1}{2})$}{} kernel} functions as follows
\begin{align} \label{eq:def_additive_kernel}
    \kappa^{\text{G}} ( \bm{x}_i , \bm{x}_j ) = \sigma_{f}^2 \sum_{s=1}^P \underbrace{\exp \left( - \tfrac{\| \bm{x}_i^{\mathcal{W}_s} - \bm{x}_j^{\mathcal{W}_s} \|_2^2}{2 \ell^2} \right)}_{\eqqcolon\kappa_s^{\text{G}}}, \quad \kappa^{\text{M}} ( \bm{x}_i , \bm{x}_j ) = \sigma_{f}^2 \sum_{s=1}^P \underbrace{\exp \left( - \tfrac{\| \bm{x}_i^{\mathcal{W}_s} - \bm{x}_j^{\mathcal{W}_s} \|_2}{ \ell} \right)}_{\eqqcolon\kappa_s^{\text{M}}}
\end{align}
where \(\kappa_s^{\text{G}}\) and \(\kappa_s^{\text{M}}\) are the windowed Gaussian and Mat\' ern\texorpdfstring{$(\frac{1}{2})$}{} kernels. %. and we use one single variance $\sigma_{f}^2$ for the kernel sums.
The derivatives of the resulting regularized additive kernel $\hat{K} = \sigma_{f}^2(K_1+\dots+K_P)+\sigma_{\varepsilon}^2I$ with respect to $\sigma_f$ and $\sigma_{\varepsilon}$ are straightforward. The derivatives with respect to the length-scale parameter $\ell$ are defined as $K^{\text{derG}} = \sigma_f^2 \sum_{s=1}^P K_s^{\text{derG}}$ and $K^{\text{derM}} = \sigma_f^2 \sum_{s=1}^P K_s^{\text{derM}}$, with
\begin{align} \label{eq:def_der_additive_kernels}
    \kappa_s^{\text{derG}} (\bm{x}_i, \bm{x}_j) = \tfrac{ \| \bm{x}_i^{\mathcal{W}_s} - \bm{x}_j^{\mathcal{W}_s} \|_2^2}{\ell^3} \kappa_s^{\text{G}} (\bm{x}_i, \bm{x}_j), \quad \kappa_s^{\text{derM}} (\bm{x}_i, \bm{x}_j) = \tfrac{ \| \bm{x}_i^{\mathcal{W}_s} - \bm{x}_j^{\mathcal{W}_s} \|_2}{\ell^2} \kappa_s^{\text{M}} (\bm{x}_i, \bm{x}_j).
\end{align}

\subsection{Feature Grouping Techniques} \label{subsec:Feature_Arrangement}
In the context of additive kernels, feature grouping involves dividing the entire set of features into smaller subsets called feature groups, denoted as $\mathcal{W}_s$. Each group $\mathcal{W}_s$ consists of $d_s$ features where $d_s$ is bounded by the maximum group size, $d_{\text{max}}$, defined as $d_{\text{max}} = 3$ in this paper. For example, consider $\mathcal{W}_s = \{a, b, c\}$ within the range $\{1, \dots, p\}$, then for $d_s = 3$, the data points restricted to these feature indices are represented as $\bm{x}_i^{\mathcal{W}_s} = \left[x_{i_a}, x_{i_b}, x_{i_c} \right]^\intercal$. This grouping can be based on feature similarity, although similar features may also be distributed across different groups to explore diverse interactions. A comprehensive analysis and comparison of several existing feature grouping techniques, specifically for the additive kernel setting, can be found in \cite{wagner2024fast}.

In this paper, we apply two techniques for determining the feature windows. The first technique ranks the feature's importance based on each feature's mutual information score (MIS)~\cite{battiti1994using}. MIS is a univariate measure that quantifies how much information about the label can be obtained by knowing the feature value. Based on the obtained feature importance scores, the features are ranked in descending order and grouped consecutively into groups of the desired size. Alternatively, elastic-net (EN)~\cite{zou2005regularization} regression can be employed for determining the feature windows. EN is a regression model that is based on a least-squares and two regularization terms (\(L_1\)- and \(L_2\)-norm of the coefficient vector \(\bm{w}\in\mathbb{R}^p\)) for enforcing sparsity into the model so that most coefficients will be zero and the corresponding features are dropped. The corresponding objective \(Z_{\text{EN}} = \tfrac{1}{2n} \| X\bm{w} - Y \|_2^2 + \lambda_{\text{EN}} \rho \|\bm{w} \|_1 + \tfrac{\lambda_{\text{EN}} (1-\rho)}{2} \|\bm{w} \|_2\) is minimized with respect to the coefficients \(\bm{w}\), and the ratio between the penalty terms is balanced with the ratio $\rho$. For \(\rho=1\), EN equals the well-known Lasso regularization~\cite{tibshirani1996regression}.
%\textcolor{blue}{F: Should we also mention the special case $\rho=0$ (LSQR).}
Once the sparse coefficient vector is obtained, the entries of $\bm{w}$ can be used as feature importance scores based on which the features can be assigned to groups either directly without further ordering or ranked by their coefficient values in descending order.

To keep computational complexity manageable in feature grouping, these techniques are usually applied to a smaller subset of the data. Both the MIS and EN techniques require only the data and the labels as inputs, and they operate independently of the kernel function.

\subsection{Preconditioning Additive Kernels} \label{subsec:Preconditioning}
Estimating the negative log marginal likelihood \eqref{eq:GP_approx_objective} and its derivatives \eqref{eq:div} involves repeatedly solving linear systems associated with the kernel matrix, where the data points remain fixed, but the hyperparameters vary during the optimization process. Different hyperparameters can substantially alter the properties of the kernel matrix. 
Figure~\ref{fig:varyingl} (left) illustrates the iteration counts for the unpreconditioned CG to solve $20$ linear systems with regularized additive Gaussian kernel matrices to a relative residual tolerance of $10^{-3}$ with zero initial vector.
These systems share a common random right-hand side. 
These $20$ kernel matrices are associated with the same $1000$ points in $\mathbb{R}^6$ and fixed $\sigma_f^2=\frac{1}{P}$, $\sigma_\varepsilon^2=0.01$, but different length-scales $\ell$.
The six features are divided into three two-dimensional windows, each randomly sampled within a circle of radius $\sqrt{\frac{1000}{\pi}}$.
The figure demonstrates that solving the linear system can be particularly challenging for a range of length-scales $\ell$ that are neither very large nor very small.

\begin{figure}[ht]
\centering
\includegraphics[width=0.8\linewidth]{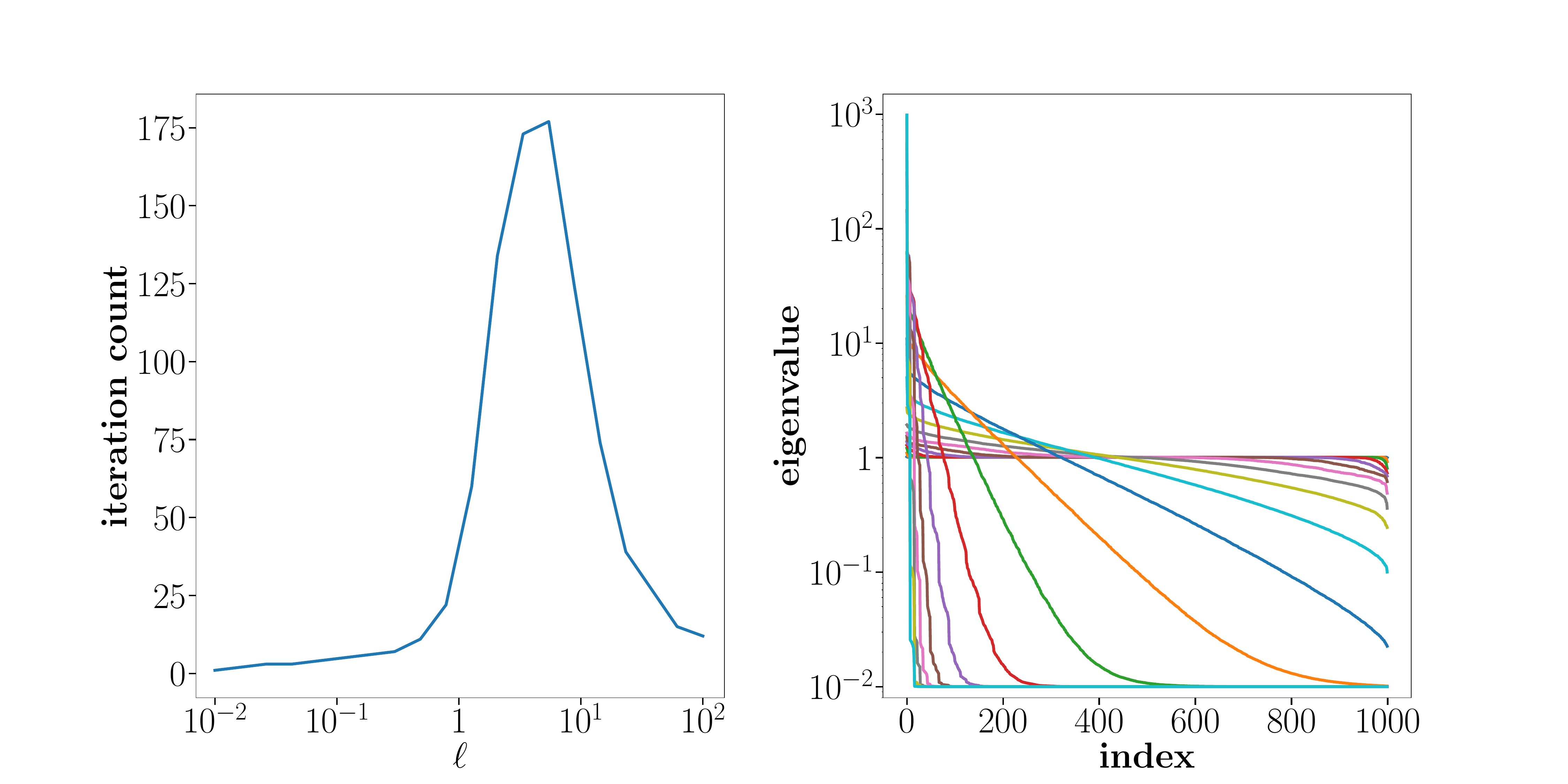}
\caption{Left: Iteration counts of unpreconditioned CG to solve linear systems for $20$ regularized additive Gaussian kernel matrices with the same random right-hand side to reach a relative residual tolerance $10^{-3}$. 
These $20$ matrices are associated with the same $1000$ points in $\mathbb{R}^6$ and fixed $\sigma_f^2=\frac{1}{P}$, $\sigma_\varepsilon^2=0.01$, but different length-scales $\ell$.
The six features are split into three two-dimensional windows.
Each window is sampled randomly within a circle of radius $\sqrt{\frac{1000}{\pi}}$.
Right: Spectra of the $20$ regularized additive Gaussian kernel matrices.}
\label{fig:varyingl}
\end{figure}

Preconditioning can be explored to enhance convergence. A detailed theoretical analysis of how various types of preconditioners can improve the asymptotic convergence rates of iterative methods used in GPs is available in \cite{wenger2022preconditioning}. Nonetheless, the selection of a practical preconditioner remains challenging, largely due to the diverse characteristics of kernel matrices caused by various hyperparameters during the optimization process \cite{zhao2023adaptive}. For instance, Figure~\ref{fig:varyingl} (right) displays the spectral properties of $20$ kernel matrices, revealing low-rank traits at higher values of $\ell$ and full rank at lower values. As a result, we choose to modify the \textit{adaptive factorized Nystr\"om} (AFN) preconditioner, which has shown to provide robust and consistent performance for non-additive kernels \cite{zhao2023adaptive}. For a dataset $\mathcal{X}$, the AFN preconditioner identifies \(k\) landmark points based on the estimated numerical rank of the kernel matrix. This process adaptively divides the matrix into a $2\times 2$ block structure, where the \((1,1)\) block corresponds to the landmark points, and the \((2,2)\) block corresponds to the remaining data. The preconditioner $M$ is then constructed by the Cholesky factorization of the \((1,1)\) block and the approximate inverse of the Schur complement. For additive kernels, we apply \textit{farthest point sampling} (FPS) to select the landmark points from each feature window and then merge the data indices of these selections to form the $(1,1)$ block. This modified version of the AFN preconditioner for additive kernels is termed \texttt{AAFN}.

%For simplicity, in our experiments, we use the variant that randomly selects landmark points.
%This preconditioner offers additional advantages, including efficient computation of its derivative. 
%We refer the reader to the paper \cite{zhao2023adaptive} for more details.

\section{Fourier-Accelerated Kernel Matrix Vector Multiplication} \label{sec:Fourier_Accelerated}
When employing iterative methods to evaluate \eqref{eq:GP_approx_objective} and \eqref{eq:div}, the multiplication of dense kernel matrices with vectors emerges as the primary computational bottleneck. To address this issue, several techniques have been developed to first approximate the dense kernel matrices using low-rank or sparse matrices. Examples include the Nyström approximation \cite{martinsson2019randomized,williams2000using,anchor,anchor2,PostGP}, random Fourier features (RFF) \cite{rahimi2007random}, structured kernel interpolation (SKI) \cite{wilson2015kernel}, or hierarchical matrices \cite{borm2003introduction,ddh2}.

In contrast, this paper proposes the Non-equispaced Fast Fourier Transform (NFFT) to directly approximate the kernel matrix-vector multiplication. The NFFT-accelerated fast summation approach, especially beneficial for additive kernel structures, employs Fourier theory to offer strong theoretical guarantees and reduce setup costs \cite{nestlerlearning,wagner2023preconditioned,wagner2024fast}. Within our framework, Fourier acceleration is individually applied to each sub-kernel \(K_s\), \(s=1,\dots,P\), with the dimensionality of each \(d_s\) capped at 3 (\(d_{\text{max}}=3\)) to maintain the computational efficiency of the NFFT technique. To simplify the notation in the analysis, \(d\) specifically denotes the dimensionality \(d_s\) of each sub-kernel in the remaining sections.

\subsection{NFFT for Additive Kernels}

The NFFT method first approximates each windowed kernel function \(\kappa\) in the additive kernel by a periodically continued function \(\kappa_{\text{R}}\)
within a bounded domain. To ensure data points fit within a bounded domain, each data point \(\bm{x}_i^{\mathcal{W}_s}\) in feature window \(\mathcal{W}_s\) is scaled to fall within the interval \([-\tfrac14, \tfrac14)^d\) in this paper. These scaled data points are denoted by \(\tilde{\bm{x}}^{\mathcal{W}_s}_i\).
\iffalse
Further scaling is defined by:
\[
\tilde{\bm{x}}_i^{\mathcal{W}_s} \coloneqq \frac{\bm{x}_i^{\mathcal{W}_s}}{\sqrt{d_{\text{max}}}},
\]
where \(\sqrt{d_{\text{max}}}/2\) is the maximum allowable norm for a data point before scaling, limited by \(d_{\text{max}} \coloneqq \max_s d_s\).  
The length-scale parameter \(\ell\) is similarly scaled, yielding:
\[
\tilde{\ell} \coloneqq \ell/\sqrt{d_{\text{max}}},
\]
to ensure uniform scaling across all dimensions. 
\fi

Define $\bm{r}$ as $\bm{x}-\bm{y}$ where $\bm{x}$ and $\bm{y}$ are the inputs to the windowed kernel function $\kappa$. Then the NFFT method first properly extends the kernel to a smooth or at least continuous periodic function $\kappa_{R}$ that is then approximated using a truncated Fourier series $\kappa_{\text{RF}}$:
\begin{align}
  \kappa(\bm{r}) = \kappa_{\text{R}} (\bm{r}) \approx \kappa_{\text{RF}} (\bm{r}) 
      = \sum_{\bm{k}\in\mathcal{I}_m} {b}_{\bm{k}}(\kappa_\text{R}) \, \mathrm e^{2\pi\mathrm{i}\bm{k}^\intercal\bm{r}},
    \label{eq:nfftkappa}  
\end{align}
where \(\mathcal{I}_m := \{\bm{k} \in \mathbb{Z}^d : -\tfrac{m}{2} \leq k_j < \tfrac{m}{2} \,\, \forall j=1,\dots, d\}\) is a multivariate index set with a cardinality of \(|\mathcal I_{m}|=m^d\) and the discrete Fourier coefficients ${b}_{\bm{k}}(\kappa_\text{R})$ are given by
\begin{align}\label{eq:discrete_fourier_coefficients1}
  {b}_{\bm k}(\kappa_\text{R}) \coloneqq \frac{1}{m^d} \sum_{\bm{l}\in\mathcal{I}_m} \kappa_\text{R}\Bigl(\frac{\bm{l}}{m}\Bigr) \, \mathrm e^{-2\pi\mathrm{i}\bm{l}^\intercal\bm{k}/m}.
\end{align}
In the simplest case, the function \(\kappa_\text{R}\) is simply the periodic continuation of \(\kappa\).
For an illustration, see Figure~\ref{fig:kernel_approx}, which depicts a one-dimensional kernel, its periodic continuation and NFFT approximation over the interval $[-\tfrac12,\tfrac12)$. It is possible to smoothen the inner or outer boundaries \cite{potts2003fast} but in our implementation, the outer boundary smoothing is set to zero.

\begin{figure}
    \centering
    % original kernel
    \begin{tikzpicture}\footnotesize
        \begin{axis}
        [width=0.35\linewidth]
        \addplot[color=blue,solid,line width=0.5pt,domain=-0.5:0.5,samples=300]{exp(-abs(x)/0.3)};
        \node[blue] at (axis cs:0.0,0.3) {$\kappa(\cdot)$};
        \end{axis}
    \end{tikzpicture}
    % periodically continued kernel "regularized kernel"
    \begin{tikzpicture}\footnotesize
        \begin{axis}
        [width=0.35\linewidth]
        \addplot[color=blue,solid,line width=0.5pt,domain=-0.5:0.5,samples=300]{exp(-abs(x)/0.3)};
        \addplot[color=blue,solid,line width=0.5pt,domain=-0.6:-0.5,samples=50]{exp(-abs(x+1)/0.3)};
        \addplot[color=blue,solid,line width=0.5pt,domain=0.5:0.6,samples=50]{exp(-abs(x-1)/0.3)};
        \node[blue] at (axis cs:0.0,0.3) {$\kappa_\text{R}(\cdot)$};
        \end{axis}
    \end{tikzpicture}
    % approximation in terms of Fourier series
    \begin{tikzpicture}\footnotesize
        \begin{axis}
        [width=0.35\linewidth]
        % original function for comparison
        %\addplot[color=blue!50,densely dashed,line width=0.5pt,domain=-0.5:0.5,samples=300]{exp(-abs(x)/0.3)};
        %\addplot[color=blue!50,densely dashed,line width=0.5pt,domain=-0.6:-0.5,samples=50]{exp(-abs(x+1)/0.3)};
        %\addplot[color=blue!50,densely dashed,line width=0.5pt,domain=0.5:0.6,samples=50]{exp(-abs(x-1)/0.3)};
        % trig polynomial
        \addplot[color=blue,domain=-0.5:0.375,samples=8,mark=*,only marks]{exp(-abs(x)/0.3)};
        \addplot[color=blue,domain=-0.6:0.6,samples=300,smooth]{0.4936953592466232 +
        0.1672815584818328*cos(360*x) + 
        0.03995989822792567*cos(2*360*x) + 
        0.03549954080877673*cos(3*360*x) + 
        0.02082264571630632*cos(4*360*x) + 
        0.03549954080877673*cos(3*360*x) + 
        0.03995989822792567*cos(2*360*x) + 
        0.1672815584818328*cos(1*360*x)};
        \node[blue] at (axis cs:0.0,0.3) {$\kappa_\text{RF}(\cdot)$};
        \end{axis}
    \end{tikzpicture}
    \caption{\label{fig:kernel_approx}
    Visualization in 1D: The original kernel function $\kappa$ (left), the periodically continued kernel function $\kappa_\textbf{R}$ (middle) and its Fourier approximation $\kappa_\text{RF}$ (right). The Fourier approximation $\kappa_\text{RF}$ is a trigonometric polynomial interpolating $m$ (here $m=8$) equidistant samples of the kernel function (dots).} %Its Fourier coefficients are given as the discrete Fourier transform of the $m$ samples.}
    %Evaluating the periodic function $\kappa_\text{RF}$ in given non-equispaced (data) points is possible based on the NFFT, which will introduce further approximation errors.}
\end{figure}
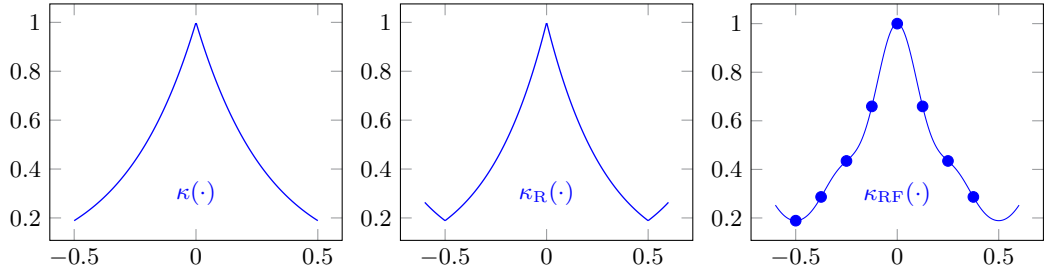

Based on \eqref{eq:nfftkappa}, NFFT finally approximates the summation 
\(h(\bm{x}^{\mathcal{W}_s}_i) \coloneqq \sum_{j=1}^n v_j \kappa(\bm{x}^{\mathcal{W}_s}_i, \bm{x}^{\mathcal{W}_s}_j)\) for all \(i=1, \dots, n\) as:
\begin{align} \label{eq:Fourier_sum}
    h(\bm{x}^{\mathcal{W}_s}_i) \approx h_\approx(\bm{x}^{\mathcal{W}_s}_i) := \sum_{\bm{k} \in \mathcal{I}_m} {b}_{\bm{k}}(\kappa_\text{R}) \left( \sum_{j=1}^{n} v_j \e^{-2 \pi \mathrm{i} \bm{k}^ \intercal \tilde{\bm{x}}^{\mathcal{W}_s}_j} \right) \e^{2 \pi \mathrm{i} \bm{k}^ \intercal \tilde{\bm{x}}^{\mathcal{W}_s}_i}.
\end{align}

By applying the adjoint NFFT for the inner sums and NFFT for the outer sums in \eqref{eq:Fourier_sum}, the summation \(h(\bm{x}^{\mathcal{W}_s}_i)\) can be approximated efficiently. This approach significantly reduces the arithmetic complexity of approximating a matrix-vector product to \(\mathcal{O}(n\log n)\). For further details on the underlying theory and the implementation of NFFT, refer to Appendix \ref{sec:nffti}.

To leverage the full computational power of the NFFT, the dimension of the feature space must be small. This is because the cost of computing Fourier coefficients increases exponentially with dimensionality. Indeed, most techniques for accelerating kernel evaluations have in common, that their effectiveness is restricted to small feature dimensions. As a result, we are splitting the feature space and work with a sum of sub-kernels \(K_s\) relying on smaller feature groups \(\mathcal{W}_s\) with \(|\mathcal{W}_s|=d_s, s=1,\dots,P\). For applying the fast summation approach to the additive kernel as introduced in~\eqref{eq:additive_kernel}, a Fourier approximation must be performed separately for each sub-kernel \(K_s\). The approximate products \(K_s\bm{v}\) are then summed up and weighted accordingly. 

\subsection{NFFT for Derivative Kernels\label{sec:nfft_for_deriv}}
We point out that the NFFT approximation of the derivative kernel exactly matches the derivative of the NFFT-approximated kernel, which is crucial for working with the correct gradients for the GP hyperparameter optimization. This consistency is validated through analysis with the Mat\'ern\texorpdfstring{$(\tfrac{1}{2})$}{} kernel below.

Consider the Mat\'ern\texorpdfstring{$(\tfrac{1}{2})$}{} kernel $\kappa^{\text{m}}(\bm{r})= \e^{-\|\bm{r}\|_2/\ell}$ and its analytical derivative with respect to \(\ell\) given by $\kappa^{\text{derm}}(\bm{r})= \tfrac{\|\bm{r}\|_2}{\ell^2} \mathrm e^{-\|\bm{r}\|_2/\ell}$.
\iffalse 
In the NFFT implementation, periodic continuations of these kernel functions
%and its derivative, denoted as \(\kappa_{\text{R}}^{\text{m}}\) and \(\kappa_{\text{R}}^{\text{derm}}\),
are first constructed and then approximated using truncated Fourier series
\begin{align*}
  %\kappa_{\text{R}}^{\text{m}} (\bm{r}) \approx \kappa_{\text{RF}}^{\text{m}} (\bm{r}) 
      %= \sum_{\bm{k}\in\mathcal{I}_m} b_{\bm{k}}(\kappa_\text{R}^{\text{m}}) \, \mathrm e^{2\pi\mathrm{i}\bm{k}^\intercal\bm{r}}, \quad 
  \kappa_{\text{R}} (\bm{r}) \approx \kappa_{\text{RF}} (\bm{r}) 
      = \sum_{\bm{k}\in\mathcal{I}_m} b_{\bm{k}}(\kappa_\text{R}) \, \mathrm e^{2\pi\mathrm{i}\bm{k}^\intercal\bm{r}},
\end{align*}
for $\kappa \in \{\kappa^{\text{m}}, \kappa^{\text{derm}} \}$, where the discrete Fourier coefficients $b_{\bm k}(\kappa)$ of a function $\kappa$ defined on $[-\tfrac12,\tfrac12)^d$ are given by
\begin{align}\label{eq:discrete_fourier_coefficients}
  b_{\bm k}(\kappa) \coloneqq \frac{1}{m^d} \sum_{\bm{l}\in\mathcal{I}_m} \kappa\Bigl(\frac{\bm{l}}{m}\Bigr) \, \mathrm e^{-2\pi\mathrm{i}\bm{l}^\intercal\bm{k}/m}.
  %b_{\bm{k}}^{\text{m}} = \frac{1}{m^d} \sum_{\bm{l}\in\mathcal{I}_m} \kappa_{\text{R}}^{\text{m}}\!\Bigl(\frac{\bm{l}}{m}\Bigr) \, \mathrm e^{-2\pi\mathrm{i}\bm{l}^\intercal\bm{k}}, \quad
  %b_{\bm{k}}^{\text{derm}} = \frac{1}{m^d} \sum_{\bm{l}\in\mathcal{I}_m} 
  %\kappa_{\text{R}}^{\text{derm}}\!\Bigl(\frac{\bm{l}}{m}\Bigr) 
  %\, \mathrm e^{-2\pi\mathrm{i}\bm{l}^\intercal\bm{k}}.
\end{align}
\fi 
Based on \eqref{eq:nfftkappa} and \eqref{eq:discrete_fourier_coefficients1}, we can express $\kappa_\text{RF}$ as
\begin{align*}
  %\kappa_{\text{R}}^{\text{m}} (\bm{r}) \approx \kappa_{\text{RF}}^{\text{m}} (\bm{r}) 
  %&= \frac{1}{m^d}\sum_{\bm{k}\in\mathcal{I}_m} \sum_{\bm{l}\in\mathcal{I}_m}
  %\kappa_{\text{R}}^{\text{m}}\!\Bigl(\tfrac{\bm{l}}{m}\Bigr) 
  %\, \mathrm e^{-2\pi\mathrm{i}\bm{l}^\intercal\bm{k}/m} \, \mathrm e^{2\pi\mathrm{i}\bm{k}^\intercal\bm{r}},
  %\\
  \kappa_{\text{RF}} (\bm{r}) 
  &= \frac{1}{m^d}\sum_{\bm{k}\in\mathcal{I}_m} \sum_{\bm{l}\in\mathcal{I}_m}
  \kappa_{\text{R}}\!\Bigl(\tfrac{\bm{l}}{m}\Bigr) 
  \, \mathrm e^{-2\pi\mathrm{i}\bm{l}^\intercal\bm{k}/m} \, \mathrm e^{2\pi\mathrm{i}\bm{k}^\intercal\bm{r}},
\end{align*}
where $\kappa \in \{\kappa^{\text{m}}, \kappa^{\text{derm}}\}$. Differentiating \(\kappa_{\text{RF}}^{\text{m}}(\bm{r})\) with respect to \(\ell\) reveals
\begin{equation}\label{eq:exact_derivative}
\frac{\partial}{\partial \ell}\,\kappa_{\text{RF}}^{\text{m}}(\bm{r})
=\frac{1}{m^d}\sum_{\bm{k}\in\mathcal{I}_m} \sum_{\bm{l}\in\mathcal{I}_m}
  \left\{ \frac{\partial}{\partial \ell}\kappa^{\text{m}}_{\text{R}}\!\Bigl(\tfrac{\bm{l}}{m}\Bigr)\right\} 
  \, \mathrm e^{-2\pi\mathrm{i}\bm{l}^\intercal\bm{k}/m} \, \mathrm e^{2\pi\mathrm{i}\bm{k}^\intercal\bm{r}} =\kappa_{\text{RF}}^{\text{derm}}(\bm{r}),
\end{equation}
demonstrating that the derivative of the approximate kernel function \(\kappa_{\text{RF}}^{\text{m}}\) coincides exactly with the NFFT approximated derivative kernel \(\kappa_{\text{RF}}^{\text{derm}}\), so we indeed provide the correct gradients using this method. This conclusion extends analogously to other possible kernel functions and corresponding derivatives with respect to their kernel parameters. %such as the Gaussian kernel, for instance.
%A schematic depiction of this relationship appears in Figure~\ref{fig:Fourier_approx_exact_derivatives}.

\begin{comment}
% Figure: Correspondence of Derivatives
\begin{figure}[!htp]
\centering
\begin{tikzpicture}

% Write derivative kernel
\node[align=center] at (0,0) {$\kappa^{\text{der}}(\bm{r})$};

% Write kernel
\node[align=center] at (0,2.5) {$\kappa(\bm{r})$};

% Write equality signs
\node[align=center] at (1,0) {$=$};
\node[align=center] at (1,2.5) {$=$};

% Write regularization for derivative kernel
\node[align=center] at (2,0) {$\kappa_{\text{R}}^{\text{der}}(\bm{r})$};

% Write regularization for kernel
\node[align=center] at (2,2.5) {$\kappa_{\text{R}}(\bm{r})$};

% Write approximation signs
\node[align=center] at (3,0) {${\approx}$};
\node[align=center] at (3,2.5) {${\approx}$};

% Write approximate regularization for derivative kernel
\node[align=center] at (4,0) {$\kappa_{\text{RF}}^{\text{der}}(\bm{r})$};

% Write approximate regularization for kernel
\node[align=center] at (4,2.5) {$\kappa_{\text{RF}}(\bm{r})$};

% Draw arrows
\draw[->] (0,2)--(0,0.5);
\draw[->] (4,2)--(4,0.5);

% Write text next to left arrow
\node[align=center] at (-0.4,1.25) {$\tfrac{\partial}{\partial\ell}$};
% Write text next to arrow
\node[align=center] at (0.65,1.25) {$\text{exact}$};

% Write text next to right arrow
\node[align=center] at (3.6,1.25) {$\tfrac{\partial}{\partial\ell}$};
% Write text next to arrow
\node[align=center] at (4.65,1.25) {$\text{exact}$};

\end{tikzpicture}
\caption{The approximate derivative $\kappa_{\text{RF}}^{\text{der}}$ represents the exact derivative of the approximate kernel $\kappa_{\text{RF}}$. 
\label{fig:Fourier_approx_exact_derivatives}}
\end{figure}
\end{comment}

\section{Error Analysis} \label{sec:Theoretical_Analysis}
We presented the Fourier acceleration as a method to efficiently replace the dense kernel matrix vector multiplication with an efficient approximation of reduced complexity. In this section, we now want to provide a rigorous analysis of how close this approximation stays to the true kernel function and its derivative. As shown in \eqref{eq:Fourier_sum}, the NFFT-based fast summation method employs a truncated Fourier series to approximate kernel-vector multiplications. The error in this approximation can be quantified using the Hölder inequality:
\begin{align} \label{eq:Fourier_approx_error}
    \left| h(\bm{x}_i) - h_\approx(\bm{x}_i) \right| &= \left| \sum_{j=1}^n v_j \kappa_\text{ERR}(\bm{x}_i, \bm{x}_j) \right|
    \le \| \bm{v} \|_1 
    \max_{\bm x_j} \left|\kappa_\text{ERR}(\bm x_i,\bm x_j)\right|
    %\kappa_\text{ERR} \|_{\infty},
    \quad \forall i = 1, \dots, n,
\end{align}
where \(\| \bm{v} \|_1 = \sum_{j=1}^n | v_j |\), and
%\(\|\kappa_\text{ERR}\|_{\infty} = \max_{\bm{x}_i, \bm{x}_j} | \kappa (\bm{x}_i, \bm{x}_j) - \kappa_\text{RF} (\bm{x}_i, \bm{x}_j)|\) is defined as the maximum absolute
and $\kappa_\text{ERR}=\kappa-\kappa_\text{RF}$
denotes the difference between the actual kernel and its truncated Fourier series representation.
To simplify the analysis, we assume $\bm{x}_i\in [-\tfrac{1}{4}, \tfrac{1}{4})^d$ and the maximum error is then computed over all points \(\bm{r}_{ij} = \bm{x}_i - \bm{x}_j \in [-\tfrac{1}{2}, \tfrac{1}{2})^d\), where \(\bm{r}_{ij}\) denotes the relative position vector between any two points in the domain. 

In this section, we derive upper bounds for $\kappa_\text{ERR}$ associated with Mat\'ern kernels and their derivatives to justify the efficiency of the NFFT method. The estimates provided here are directly applicable to a single sub-kernel. For an additive kernel structure composed of multiple sub-kernels, these estimates can be effectively extended by independently applying the derived bounds to each component. To the best of our knowledge, this marks the first provision of Fourier approximation error estimates for Mat\'ern kernels.

\subsection{\(1\)-Periodic Periodization}
While analytic estimates are available in one-dimensional settings, they become intractable when \(d > 1\). The primary challenge in higher dimensions stems from the interaction between the Euclidean norm $\Vert \bm{x}-\bm{y}\Vert_2$ and oscillatory terms, which excludes a straightforward decomposition of the multidimensional Fourier coefficients into products of univariate ones as done in the error analysis \cite{wagner2024fast} for the Gaussian kernel. Nevertheless, this complexity can be mitigated by adopting a 
\(1\)-periodic periodization. 

\begin{definition}[\(1\)-Periodic Periodization]
For any integrable function $f \in L_1(\mathbb{R}^d)$, the \(1\)-periodic periodization $\tilde{f}$ of $f$ is defined as
\begin{align} \label{eq:periodic_summation}
    \tilde{f} (\bm{r}) \coloneqq \sum_{\bm{l} \in \mathbb{Z}^d} f(\bm{r} + \bm{l}).
\end{align}
\end{definition}
This $\tilde{f}$ is $1$-periodic in each dimension and can serve as an accurate approximation of $f$ when $f$ retains negligible values outside the hypercube $[-\tfrac{1}{2}, \tfrac{1}{2})^d$.  Refer to Figure \ref{fig:1periodization} for a Mat\'ern\texorpdfstring{\((\frac{1}{2})\)}{} kernel example with a small length-scale $\ell$. It is also important to note that the 1-periodic periodization $\tilde{f}$ is specifically employed for analyzing Fourier approximation errors within this section. In contrast, the periodic continuation $\kappa_{\text{R}}$ as introduced in Section~\ref{sec:nfft_for_deriv} is utilized in the actual NFFT implementation. 

Let $\hat{f}(\bm{\omega}) = \int_{\mathbb{R}^d} f(\bm{x}) \e^{-2\pi\mathrm{i}\bm{\omega}^\intercal\bm{x}} \, \mathrm{d} \bm{x}$
represent the $d$-dimensional Fourier transform of an integrable function $f$. According to the Poisson summation formula \cite{plonka2018numerical}, the Fourier coefficient $c_{\bm{k}}(\tilde{f})$ corresponds to the continuous Fourier transform of $f$ as follows:
\begin{equation}
    c_{\bm{k}}(\tilde f)
    \coloneqq \int_{[-\frac12,\frac12)^d} \tilde f(\bm x) \,\e^{-2\pi\i\bm k^\intercal\bm x} \dbmx
    = \hat{f}(\bm{k}).
\end{equation}

Thus, provided that $f(\bm{r}) \approx \tilde{f} (\bm{r})$ for all $\bm{r} \in [-\tfrac{1}{2},\tfrac{1}{2})^d$:
\begin{equation}\label{eq:matern3d_periodization}
    f(\bm r) \approx \tilde f(\bm r)
    = \sum_{\bm{k}\in\mathbb{Z}^d} c_{\bm{k}}(\tilde{f}) \,\e^{2\pi\mathrm{i}\bm{k}^\intercal\bm{r}}
    = \sum_{\bm{k}\in\mathbb{Z}^d} \hat f(\bm k) \,\e^{2\pi\mathrm{i}\bm{k}^\intercal\bm{r}},
\end{equation}
we have
\begin{equation}
\label{eq:approxc}
c_{\bm k}(f)\approx c_{\bm k}(\tilde f)=\hat f(\bm k).
\end{equation}

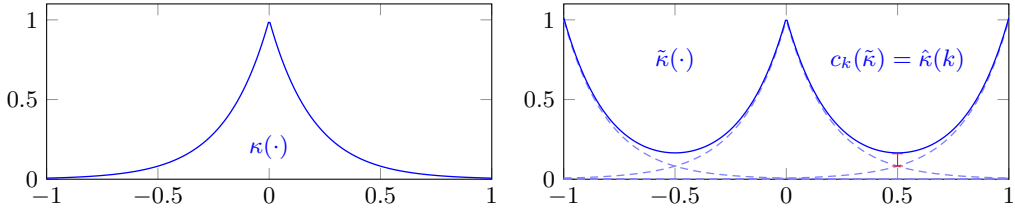
\begin{figure}
    \centering
    % original kernel
    \begin{tikzpicture}\footnotesize
        \begin{axis}
        [width=0.48\linewidth,ymin=0,ymax=1.1,xmin=-1,xmax=1,height=0.25\linewidth]
        \addplot[color=blue,solid,line width=0.5pt,domain=-1.0:1.0,samples=300]{exp(-abs(x)/0.2)};
        \node[blue] at (axis cs:0.0,0.2) {$\kappa(\cdot)$};
        \end{axis}
    \end{tikzpicture}
    % periodization
    \begin{tikzpicture}\footnotesize
        \begin{axis}
        [width=0.48\linewidth,ymin=0,ymax=1.1,xmin=-1,xmax=1,height=0.25\linewidth]
        \addplot[color=blue!50,densely dashed,line width=0.5pt,domain=-1.0:1.0,samples=300]{exp(-abs(x)/0.2)};
        \addplot[color=blue!50,densely dashed,line width=0.5pt,domain=-1.0:1.0,samples=300]{exp(-abs(x-1)/0.2)};
        \addplot[color=blue!50,densely dashed,line width=0.5pt,domain=-1.0:1.0,samples=300]{exp(-abs(x+1)/0.2)};
        \addplot[color=blue!50,densely dashed,line width=0.5pt,domain=-1.0:1.0,samples=300]{exp(-abs(x-2)/0.2)};
        \addplot[color=blue!50,densely dashed,line width=0.5pt,domain=-1.0:1.0,samples=300]{exp(-abs(x+2)/0.2)};
        \addplot[color=blue,solid,line width=0.5pt,domain=-1.0:1.0,samples=300]{exp(-abs(x)/0.2)+exp(-abs(x+1)/0.2)+exp(-abs(x-1)/0.2)+exp(-abs(x-2)/0.2)+exp(-abs(x+2)/0.2)};
        \node[blue] at (axis cs:-0.5,0.75) {$\tilde\kappa(\cdot)$};
        \node[blue] at (axis cs:0.5,0.75) {$c_k(\tilde\kappa)=\hat\kappa(k)$};
        \addplot[color=red,mark=-] coordinates{(0.5,0.083) (0.5,0.16)};
        \end{axis}
    \end{tikzpicture}
    \caption{Visualization in 1D: The original kernel function $\kappa(r)=\e^{-|r|/\ell}$ with $\ell=0.2$ (left) and its 1-periodic periodization $\tilde\kappa$ (right).
    The Fourier coefficients of the periodization are given by the Fourier transform of $\kappa$, evaluated at integer values. The difference between $\kappa$ and $\tilde{\kappa}$ is small when $\ell$ remains small.  \label{fig:1periodization}}
\end{figure}

When we only use finitely many Fourier coefficients with frequencies $\bm{k}\in\mathcal{I}_m$, we can estimate the Fourier approximation error in \eqref{eq:Fourier_approx_error} as
\begin{align} \label{eq:error_estimate_mat3d}
    \|\kappa_{\text{ERR}}\|_{\infty} = \max_{\bm{r} \in [-\frac12, \frac12)^d} |\kappa_{\text{ERR}}(\bm r)| \leq 2 \sum_{\bm{k}\in \mathbb{Z}^d\setminus\mathcal{I}_m} |c_{\bm{k}}(\kappa)|
    \approx 2 \sum_{\bm{k}\in \mathbb{Z}^d\setminus\mathcal{I}_m} |\hat \kappa(\bm k)|,
    %\approx 2 \sum_{\bm{k}\in \mathbb{Z}^d\setminus\mathcal{I}_m} |c_{\bm{k}}(\tilde{\kappa})|,
\end{align}
where the first inequality arises from the well-known aliasing formula $b_{\bm k}(\kappa)=\sum_{\bm l\in\mathbb Z^d}c_{\bm k+\bm l m}(\kappa)$, see \cite{potts2003fast, plonka2018numerical} and references therein.
In the following two lemmas, we show that the error between the trivariate Mat\'ern\texorpdfstring{\((\frac{1}{2})\)}{} kernel $\kappa^\text{m}$ (Lemma \ref{lemma1}), its derivative $\kappa^\text{derm}$ (Lemma \ref{lemma2}), and their 1-periodic periodizations diminishes exponentially as $\ell$ approaches zero.
These analyses justify the use of the periodized functions in deriving the NFFT error estimates when $\ell$ is small.

\begin{lemma}\label{lemma1}
For $\bm r\in[-\frac12,\frac12)^3$ and the trivariate Mat\'ern\texorpdfstring{$(\tfrac{1}{2})$}{} kernel $\kappa^\mathrm{m}$, we have
\begin{equation}\label{eq:error_tilde_kappa_m}
    %\left|\sum_{\bm k\in \mathcal I_m} b_{\bm k}(\kappa^\text{m}-\tilde \kappa^\text{m})\,\e^{2\pi\mathrm i\bm k^\intercal\bm x}\right|
    %\leq
    \max_{\bm r\in[-\frac12,\frac12)^3}\|\kappa^\mathrm{m}-\tilde \kappa^\mathrm{m}\|_{\infty}
    \leq \delta^\mathrm{m}(\ell),
\end{equation}
where the quantity $\delta^\mathrm{m}(\ell)$ is given by
\begin{equation*}
    \delta^\mathrm{m}(\ell) = 3\,\e^{-1/(2\sqrt3\ell)}(1+2\sqrt3\ell) + 3\,\e^{-1/\sqrt3\ell}(1+2\sqrt3\ell)^2 + \e^{-3/(2\sqrt3\ell)}(1+2\sqrt3\ell)^3,
\end{equation*}
which becomes negligibly small for small $\ell$.
%For the Fourier approximations obtained from the discrete Fourier coefficients of both functions we can show the qualitative behavior
%\begin{equation*}
%    \left|\sum_{\bm k\in\mathcal I_m} b_{\bm k}(\kappa^\text{m}-\tilde\kappa^\text{m})\e^{2\pi\i\bm k^\intercal\bm x}\right|
%    \leq |\mathcal I_m| \delta^\text{m}(\ell).
%\end{equation*}
\end{lemma}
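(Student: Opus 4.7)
Since $\kappa^{\mathrm m}\geq 0$ and $\tilde\kappa^{\mathrm m}$ is the sum of all integer translates of $\kappa^{\mathrm m}$, for every $\bm r\in[-\tfrac12,\tfrac12)^3$ we have
$$\tilde\kappa^{\mathrm m}(\bm r)-\kappa^{\mathrm m}(\bm r)=\sum_{\bm l\in\mathbb Z^3\setminus\{\bm 0\}}\e^{-\|\bm r+\bm l\|_2/\ell}\geq 0,$$
so the required $L_\infty$ error equals the supremum of this nonnegative tail over $\bm r\in[-\tfrac12,\tfrac12)^3$. The central obstacle is that the Euclidean norm couples the three coordinates, ruling out the direct tensor factorization that succeeds for the Gaussian kernel in \cite{wagner2024fast}.

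To circumvent this, I would relax to a factorizable $\ell_1$ bound via the Cauchy--Schwarz inequality $\|\bm x\|_1\leq\sqrt3\,\|\bm x\|_2$ in $\mathbb R^3$, which gives $\e^{-\|\bm r+\bm l\|_2/\ell}\leq\prod_{j=1}^3\e^{-\beta|r_j+l_j|}$ with $\beta:=1/(\sqrt3\,\ell)$. Writing $H_j:=\e^{-\beta|r_j|}$ and $G_j:=\sum_{l\in\mathbb Z,\,l\neq 0}\e^{-\beta|r_j+l|}$, the tail becomes $\prod_{j=1}^3(H_j+G_j)-\prod_{j=1}^3 H_j$. Expanding over nonempty subsets $S\subseteq\{1,2,3\}$ and bounding $H_j\leq 1$ yields
$$\sum_{\emptyset\neq S\subseteq\{1,2,3\}}\prod_{j\in S}G_j\prod_{j\notin S}H_j\ \leq\ (1+g)^3-1\ =\ 3g+3g^2+g^3,$$
where $g:=\sup_{r\in[-1/2,1/2)}G(r)$. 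The remaining work reduces to a sharp estimate of the univariate lattice sum $G(r)$.

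For that, I would split $G(r)$ according to the sign of $l$, use $|r+l|=l+r$ for $l\geq 1$ and $|r+l|=l-r$ for $l\leq -1$ (valid for $|r|<\tfrac12$), and sum the two geometric series to get $G(r)=2\e^{-\beta}\cosh(\beta r)/(1-\e^{-\beta})$. Its supremum on $[-\tfrac12,\tfrac12)$ is $\e^{-\beta/2}\coth(\beta/2)=\e^{-\beta/2}\bigl(1+\tfrac{2}{\e^\beta-1}\bigr)$. Applying the elementary inequality $\e^\beta-1\geq\beta$ then yields $g\leq\e^{-\beta/2}(1+2/\beta)=\e^{-1/(2\sqrt3\ell)}(1+2\sqrt3\,\ell)=:x$, and substitution into $3x+3x^2+x^3$ reproduces the three terms of $\delta^{\mathrm m}(\ell)$ exactly. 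The Cauchy--Schwarz step is the only place where the exponent is weakened (by the factor $1/\sqrt3$), but this is precisely what decouples the sum; any sharper estimate would require directly handling the three-dimensional lattice sum of $\e^{-\|\bm r+\bm l\|_2/\ell}$, which, as noted in the excerpt, is intractable.
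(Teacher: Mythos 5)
Your proposal is correct and follows essentially the same route as the paper's proof: the key step in both is the inequality $\|\bm x\|_2\geq\tfrac{1}{\sqrt3}\|\bm x\|_1$, which tensorizes the lattice sum into univariate factors, followed by the same univariate tail bound $\e^{-1/(2\sqrt3\ell)}(1+2\sqrt3\ell)$ and the expansion $(1+g)^3-1$. The only cosmetic difference is that you evaluate the univariate lattice sum in closed form and then apply $\e^\beta-1\geq\beta$, whereas the paper bounds the geometric tail by an integral; both give the identical constant.
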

\begin{proof} See Appendix \ref{lemma1proof}.
\end{proof}

\begin{lemma}\label{lemma2}
For $\bm r\in[-\tfrac12,\tfrac12)^3$ and the trivariate derivative Mat\'ern\texorpdfstring{$(\tfrac{1}{2})$}{} kernel $\kappa^\mathrm{derm}$ with $\ell<\frac12$ we have
\begin{equation}%\label{eq:error_f_tilfef}
    %\left|\sum_{\bm k\in \mathcal I_m} b_{\bm k}(\kappa^\text{derm}-\tilde \kappa^\text{derm})\,\e^{2\pi\mathrm i\bm k^\intercal\bm x}\right|
    %\leq
    \max_{\bm r\in[-\frac12,\frac12)^3}\|\kappa^\mathrm{derm}-\tilde \kappa^\mathrm{derm}\|_{\infty}
    \leq \delta^\mathrm{derm}(\ell),
\end{equation}
where the quantity $\delta^\mathrm{derm}(\ell)$ is given by
\begin{equation*}
    \delta^\mathrm{derm}(\ell) =
    \frac{3}{\ell^2}\left(1+\e^{-1/(2\sqrt3\ell)}(1+2\sqrt3\ell)\right)^2
    \left(1+\e^{-1/(2\sqrt3\ell)}(1+2\sqrt3\ell+12\ell^2)\right)
    -\frac3{\ell^2}
\end{equation*}
which becomes negligibly small for small $\ell$.
\end{lemma}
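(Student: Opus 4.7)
The plan is to follow the strategy of Lemma~\ref{lemma1}, adapted to handle the extra linear factor $\|\bm r\|_2$ carried by the derivative kernel. Writing $x\coloneqq 1/(\sqrt3\,\ell)$, $f(t)\coloneqq\mathrm e^{-|t|x}$, and $g(t)\coloneqq|t|\,\mathrm e^{-|t|x}$, the first step is to establish a separable pointwise bound in $\mathbb R^3$ by multiplying together the two norm inequalities $\|\bm u\|_2\le\|\bm u\|_1$ and $\|\bm u\|_2\ge\|\bm u\|_1/\sqrt3$ (both valid in $\mathbb R^3$):
\begin{equation*}
\|\bm u\|_2\,\mathrm e^{-\|\bm u\|_2/\ell}\le\|\bm u\|_1\,\mathrm e^{-\|\bm u\|_1/(\sqrt3\ell)}=\sum_{i=1}^3 g(u_i)\prod_{j\ne i} f(u_j).
\end{equation*}
This reduces the radial derivative kernel to a sum of tensor-product factors and is the crucial preparatory step for the lattice sum below.

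Substituting $\bm u=\bm r+\bm l$, summing over $\bm l\in\mathbb Z^3\setminus\{\mathbf 0\}$, interchanging the finite sum in $i$ with the absolutely convergent lattice sum, and peeling off the $\bm l=\mathbf 0$ contribution separately in each coordinate gives
\begin{equation*}
\textstyle\sum_{\bm l\ne\mathbf 0}\kappa^{\mathrm{derm}}(\bm r+\bm l)\le\tfrac{1}{\ell^2}\sum_{i=1}^3\Bigl[G(r_i)\prod_{j\ne i}F(r_j)-g(r_i)\prod_{j\ne i}f(r_j)\Bigr],
\end{equation*}
where $F,G$ are the $1$-periodic periodizations of $f,g$. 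Writing $F=f+b_f$ and $G=g+b_g$, the bracketed difference expands into the seven non-negative cross-terms containing at least one $b$-factor. Using the easy pointwise bounds $f\le 1$ and $g\le|r|\le\tfrac12\le 1$ on $[-\tfrac12,\tfrac12)$ together with $b_f\le A$ and $b_g\le B$, these seven terms sum to at most $2A+A^2+B+2AB+A^2B=(1+A)^2(1+B)-1$. Accounting for the three values of $i$ and the prefactor $1/\ell^2$ then reproduces $\delta^{\mathrm{derm}}(\ell)$ exactly.

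It remains to verify the two one-dimensional maxima $A\le\mathrm e^{-x/2}(1+2/x)$ and $B\le\mathrm e^{-x/2}(1+2/x+4/x^2)$. The bound on $A$ is inherited verbatim from Lemma~\ref{lemma1}: a geometric-series calculation gives $b_f(r)=2\cosh(rx)\,\mathrm e^{-x}/(1-\mathrm e^{-x})$ with maximum $\mathrm e^{-x/2}\coth(x/2)$ at $|r|=\tfrac12$, and $\coth(x/2)\le 1+2/x$ reduces by cross-multiplication to $\mathrm e^x\ge 1+x$. Differentiating that same series in $x$ yields the companion closed form $b_g(r)=2\cosh(rx)\,S-2r\sinh(rx)\,T$ with $S=\mathrm e^{-x}/(1-\mathrm e^{-x})^2$ and $T=\mathrm e^{-x}/(1-\mathrm e^{-x})$; since $r\sinh(rx)\ge 0$ for every $r$, dropping that non-positive correction and using $\cosh(rx)\le\cosh(x/2)$ yields $b_g(r)\le\mathrm e^{-x/2}(1+\mathrm e^{-x})/(1-\mathrm e^{-x})^2$.

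The main obstacle will be the sharp inequality
\begin{equation*}
\frac{1+\mathrm e^{-x}}{(1-\mathrm e^{-x})^2}\le 1+\frac{2}{x}+\frac{4}{x^2},
\end{equation*}
which I plan to prove in two moves. First, the lower bound $1-\mathrm e^{-x}\ge x/(1+x)$ (again a direct rearrangement of $\mathrm e^x\ge 1+x$) upgrades to $(1-\mathrm e^{-x})^2\ge x^2/(1+x)^2$, and combining this with the algebraic identity $(x^2+2x+4)/(1+x)^2=1+3/(1+x)^2$ reduces everything to the polynomial-exponential inequality $3\mathrm e^x\ge(1+x)^2$. This last step falls to two successive convexity applications to $\phi(x)\coloneqq 3\mathrm e^x-(1+x)^2$: since $\phi''(x)=3\mathrm e^x-2>0$, $\phi'$ is increasing from $\phi'(0)=1>0$, so $\phi$ is strictly increasing from $\phi(0)=2>0$. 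Translating $x=1/(\sqrt3\ell)$ then produces the promised $B\le\mathrm e^{-1/(2\sqrt3\ell)}(1+2\sqrt3\ell+12\ell^2)$ and completes the plan.
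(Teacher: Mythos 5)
Your proposal is correct and follows essentially the same route as the paper: bound the radial kernel by the separable bound $\|\bm u\|_2\e^{-\|\bm u\|_2/\ell}\le\|\bm u\|_1\e^{-\|\bm u\|_1/(\sqrt3\ell)}$, factor the lattice sum into products of univariate periodizations, and expand the cross-terms to obtain $\frac{3}{\ell^2}\bigl[(1+A)^2(1+B)-1\bigr]$. The only difference is technical: where the paper bounds the univariate tail sums by comparing the series with an integral, you evaluate them in closed form as geometric series and control the resulting expressions via $\e^x\ge1+x$ and $3\e^x\ge(1+x)^2$; this yields a $B$ that is a factor of two larger than the paper's intermediate bound but still reproduces the stated $\delta^{\mathrm{derm}}(\ell)$ exactly, and as a small bonus it does not require the hypothesis $\ell<\tfrac12$ used by the paper to locate the univariate maximum.
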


\begin{proof}
See Appendix \ref{lemma2proof}.
\end{proof}

\subsection{Fourier Approximation Error for Periodized Kernels}
In this section, we analyze the Fourier approximation error \eqref{eq:error_estimate_mat3d} using the 1-Periodic Periodized Mat\'ern kernel functions. This analysis provides insight into how the error decays for the true kernels at small length-scales.

We first derive the error estimate for the periodized trivariate Mat\'ern\texorpdfstring{\((\frac{1}{2})\)}{} kernel in the next theorem.
\begin{theorem}[Fourier Error Estimate for the Periodized Trivariate Mat\'ern\texorpdfstring{\((\frac{1}{2})\)}{} Kernel] \label{theorem:Fourier_Error_Matern_3d}
%Let the kernel matrix be defined by the trivariate Mat\'ern\texorpdfstring{\((\frac{1}{2})\)}{} kernel $\kappa_s^{\text{\emph{M}}}$ in \eqref{eq:def_additive_kernel}. Then, for small length-scale kernel parameters $\ell$, the following estimate holds true \textcolor{red}{Yuanzhe: No requirement for the data range? }
For the periodized trivariate Mat\'ern\texorpdfstring{\((\frac{1}{2})\)}{} kernel $\tilde\kappa^\mathrm{m}$ on $[-\frac12,\frac12)^3$, we have
\begin{align*}
\|\tilde\kappa^\mathrm{m}_{\mathrm{ERR}}\|_{\infty}
%\max_{\bm r\in[-\frac12,\frac12)^3} \left\|\tilde\kappa^\text{m}(\bm r)-\tilde\kappa^\text{m}_{\text{RF}}(\bm r)\right\|_{\infty} 
\leq
\frac{8}{\pi^2\ell(m-2\sqrt{3})}.
\end{align*}
\end{theorem}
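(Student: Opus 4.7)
The plan is to reduce the $L^\infty$ error to a tail of the continuous Fourier transform of the Mat\'ern$(\tfrac12)$ kernel, and then to control that tail by a spherical integral. Starting from the error bound~\eqref{eq:error_estimate_mat3d} combined with the Poisson identity~\eqref{eq:approxc}, the key quantity to estimate is $2\sum_{\bm k \in \mathbb Z^3 \setminus \mathcal I_m} |\hat\kappa^\mathrm{m}(\bm k)|$, where $\hat\kappa^\mathrm{m}$ denotes the $\mathbb R^3$-Fourier transform of $\e^{-\|\bm r\|_2/\ell}$. First I would compute this transform in spherical coordinates: the angular part collapses to $\sin(2\pi\omega r)/(\pi\omega r)$, and the remaining radial integral $\int_0^\infty r\,\e^{-r/\ell}\sin(2\pi\omega r)\,dr$ has a standard closed form, yielding $\hat\kappa^\mathrm{m}(\bm\omega) = \tfrac{8\pi\ell^3}{(1+4\pi^2\ell^2\|\bm\omega\|_2^2)^2}$. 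Dropping the constant $1$ in the denominator then gives the clean decay estimate $|\hat\kappa^\mathrm{m}(\bm k)| \le \tfrac{1}{2\pi^3\ell\|\bm k\|_2^4}$ for $\bm k \neq \bm 0$.

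Next I would convert the tail sum $\sum_{\bm k \notin \mathcal I_m} \|\bm k\|_2^{-4}$ to a spherical integral by associating each lattice point with its centered unit cube $C_{\bm k} = \bm k + [-\tfrac12,\tfrac12]^3$. These cubes tile $\mathbb R^3$ disjointly, and for any $\bm y \in C_{\bm k}$ the reverse triangle inequality gives $\|\bm k\|_2 \ge \|\bm y\|_2 - \tfrac{\sqrt 3}{2}$, so $\|\bm k\|_2^{-4} \le (\|\bm y\|_2 - \tfrac{\sqrt 3}{2})^{-4}$. Since $\bm k \notin \mathcal I_m$ forces $\|\bm k\|_\infty \ge m/2$ and hence $\|\bm k\|_2 \ge m/2$, the union $\bigcup_{\bm k \notin \mathcal I_m} C_{\bm k}$ is contained in $\{\bm y : \|\bm y\|_2 \ge (m-\sqrt 3)/2\}$. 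Passing to spherical coordinates with $r = \|\bm y\|_2$ and substituting $s = r - \tfrac{\sqrt 3}{2}$, the bounding integral becomes $4\pi\int_{(m-2\sqrt 3)/2}^\infty (s+\tfrac{\sqrt 3}{2})^2/s^4\,ds$, whose dominant $1/s^2$ term integrates to $8\pi/(m-2\sqrt 3)$. Multiplying by $\tfrac{1}{2\pi^3\ell}$ from the decay estimate and by the factor $2$ from the aliasing bound then reproduces exactly $\tfrac{8}{\pi^2\ell(m-2\sqrt 3)}$.

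The main obstacle will be the geometric cube-to-integral comparison: one must verify carefully that $\bigcup_{\bm k \notin \mathcal I_m} C_{\bm k} \subset \{\bm y : \|\bm y\|_2 \ge (m-\sqrt 3)/2\}$ and that the pointwise upper bound $\|\bm k\|_2^{-4} \le (\|\bm y\|_2 - \tfrac{\sqrt 3}{2})^{-4}$ holds uniformly on each $C_{\bm k}$, both of which hinge on the $\sqrt 3/2$ distance from a lattice point to a corner of its Voronoi cube (and require the mild assumption $m \ge 2\sqrt 3$ so that $\|\bm y\|_2 - \tfrac{\sqrt 3}{2} \ge 0$ throughout). A secondary technical subtlety is that the binomial expansion $(s + \sqrt 3/2)^2 = s^2 + \sqrt 3\,s + 3/4$ produces positive $1/s^3$ and $1/s^4$ contributions in the radial integral of order $1/(m-2\sqrt 3)^2$ and $1/(m-2\sqrt 3)^3$; these corrections must be absorbed into the leading estimate, so the cleanest route is to retain only the dominant $1/s^2$ part in the write-up and present the target bound as the leading-order asymptotic estimate that it is.
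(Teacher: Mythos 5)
Your overall route is exactly the paper's: use the closed-form Fourier transform of the trivariate Mat\'ern$(\tfrac12)$ kernel, drop the $\tfrac{1}{4\pi^2\ell^2}$ term to get $|\hat\kappa^{\mathrm m}(\bm k)|\le \tfrac{1}{2\pi^3\ell\|\bm k\|_2^4}$, bound the aliasing tail $2\sum_{\bm k\notin\mathcal I_m}|\hat\kappa^{\mathrm m}(\bm k)|$ by an integral over the exterior of a ball of radius $\tfrac{m}{2}-\sqrt3$, and evaluate it in spherical coordinates to get $\tfrac{8\pi}{m-2\sqrt3}$. Your direct computation of the transform in spherical coordinates gives $\tfrac{8\pi\ell^3}{(1+4\pi^2\ell^2\|\bm\omega\|_2^2)^2}$, which agrees with the paper's formula, so that part is fine.

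The genuine gap is in your sum-to-integral comparison, and you half-admit it yourself. Using centered cubes $C_{\bm k}=\bm k+[-\tfrac12,\tfrac12]^3$ and the bound $\|\bm k\|_2^{-4}\le(\|\bm y\|_2-\tfrac{\sqrt3}{2})^{-4}$ leads to $4\pi\int_{(m-2\sqrt3)/2}^{\infty}(s+\tfrac{\sqrt3}{2})^2 s^{-4}\,\mathrm ds$, which is \emph{strictly larger} than $\tfrac{8\pi}{m-2\sqrt3}$ because of the positive $s^{-3}$ and $s^{-4}$ contributions. Declaring the result a ``leading-order asymptotic'' does not prove the stated inequality, which is an actual upper bound. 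The fix is to choose the comparison cells so that the integrand needs no shift: associate to each $\bm k\notin\mathcal I_m$ a unit cube $C_{\bm k}$ translated \emph{toward the origin} so that every $\bm y\in C_{\bm k}$ satisfies $\|\bm y\|_2\le\|\bm k\|_2$ (hence $\|\bm y\|_2^{-4}\ge\|\bm k\|_2^{-4}$), while every such $\bm y$ still has $\|\bm y\|_2\ge\|\bm k\|_2-\sqrt3\ge\tfrac{m}{2}-\sqrt3$. These cells are essentially disjoint, so
\begin{equation*}
\sum_{\substack{\bm k\in\mathbb Z^3\\ \|\bm k\|_2\ge m/2}}\frac{1}{\|\bm k\|_2^4}
\;\le\;\int_{\|\bm y\|_2\ge m/2-\sqrt3}\frac{\mathrm d\bm y}{\|\bm y\|_2^4}
\;=\;\frac{8\pi}{m-2\sqrt3},
\end{equation*}
with no correction terms; this is precisely the ``shift from integer to real variables'' the paper invokes, and it requires $m>2\sqrt3$ as you note. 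With that replacement your argument closes and reproduces the paper's proof.
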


\begin{proof}
%Let us denote $f(\bm{x}) \coloneqq \e^{-\|\bm{x}\|_2/\ell}$ with $\ell>0$.
By \cite{williams2006gaussian}, the $d$-dimensional Fourier transform of a function $g(\bm{x})=\e^{-2\pi\alpha\|\bm{x}\|_2}$ is given as
\begin{align*}
    \hat{g}(\bm{\omega}) = \frac{\Gamma\left(\tfrac{d+1}{2}\right)}{\pi^{(d+1)/2}} \cdot \frac{\alpha}{\left(\alpha^2 + \|\bm{\omega}\|_2^2\right)^{(d+1)/2}}.
\end{align*}
For $\alpha=\tfrac{1}{2\pi\ell}$, $g(\bm{x})$ equals the Mat\'ern\texorpdfstring{\((\frac{1}{2})\)}{} kernel $\kappa^\text{m}$, and thus
\begin{align*}
    \hat\kappa^\text{m}(\bm{\omega}) = \frac{\Gamma\left(\tfrac{d+1}{2}\right)}{\pi^{(d+1)/2}} \cdot \frac{1}{2\pi\ell} \cdot \frac{1}{\left(\tfrac{1}{4\pi^2\ell^2} + \|\bm{\omega}\|_2^2\right)^{(d+1)/2}}.
\end{align*}
For the trivariate case, we have $d=3$ and with $\Gamma(2)=1$ for the Gamma function, we obtain
\begin{align} \label{eq:Fourier_transform_matern_3d}
\hat\kappa^\text{m}(\bm{\omega}) = \frac{1}{\pi^2} \cdot \frac{1}{2\pi\ell} \cdot \frac{1}{\left(\tfrac{1}{4\pi^2\ell^2} + \|\bm{\omega}\|_2^2\right)^{2}} \leq \frac{1}{2\pi^3\ell\|\bm{\omega}\|_2^4},
\end{align}
where the inequality is tighter if $\tfrac{1}{4\pi^2\ell^2}$ is small compared to $\|\bm{\omega}\|_2^2$.
% Inserted this comment at a later point:
%If we further assume $\eta = \tfrac{\pi m \ell}{\sqrt{2}}\geq1$ as we do for the error bounds derived above, $\tfrac{1}{4\pi^2\ell^2} \leq \tfrac{m^2}{8}$ and the estimate in \eqref{eq:Fourier_transform_matern_3d} becomes more accurate.
%For small length-scale values, $f$ can now be periodized via the periodic summation $\tilde{f}$, see \eqref{eq:matern3d_periodization}.

By \eqref{eq:Fourier_transform_matern_3d} and by estimating the sums via adding more summands to the error sum, %, see Figure~\ref{fig:error_sums},
we now estimate
\begin{align*}
    2 \sum_{\bm{k}\in\mathbb{Z}^3\setminus\mathcal{I}_m} |c_{\bm{k}}(\tilde\kappa^{m})| \leq \frac{2}{2\pi^3\ell} \sum_{\bm{k}\in\mathbb{Z}^3\setminus\mathcal{I}_m} \frac{1}{\|\bm{k}\|_2^4} \leq \frac{1}{\pi^3\ell} \sum_{\substack{\bm{k}\in\mathbb{Z}^3 \\ \|\bm{k}\|_2\geq m/2}} \frac{1}{\|\bm{k}\|_2^4}
    \leq \frac{1}{\pi^3\ell} \int_{\substack{\bm{x}\in\mathbb{R}^3 \\ \|\bm{x}\|_2 \geq m/2-\sqrt{3}}} \frac{1}{\|\bm{x}\|_2^4} \, \mathrm{d}\bm{x},
\end{align*}
where the last inequality is obtained by estimating the sum from above, making use of the fact that $\tfrac{1}{\bm{x}^4}$ is monotonically decreasing and taking the shift from integer to real variables into account.
For computing the last integral, we employ polar coordinates in three dimensions, where $r\in[m/2-\sqrt{3},\infty)$, $\varphi \in [0,2\pi)$, $\vartheta \in [0,\pi)$ and $r^2 \sin(\vartheta)$ is the Jacobian determinant.
With this, we compute
\begin{align*}
    \int_{\substack{\bm{x}\in\mathbb{R}^3 \\ \|\bm{x}\|_2 \geq m/2-\sqrt{3}}} \frac{1}{\|\bm{x}\|_2^4} \, \mathrm{d}\bm{x} &= \int_{m/2-\sqrt{3}}^\infty \int_{0}^{2\pi} \int_{0}^\pi \frac{r^2 \sin(\vartheta)}{r^4} \, \mathrm{d}\vartheta\, \mathrm{d} \varphi \,\mathrm{d}r
    %= 2\pi \cdot 2 \cdot \int_{m/2-\sqrt{3}}^\infty \frac{1}{r^2} \, \mathrm{d} r \\
    %&= 4\pi \left[ -\frac{1}{r}\right]_{m/2-\sqrt{3}}^\infty = \frac{4\pi}{m/2-\sqrt{3}}
    = \frac{8\pi}{m-2\sqrt{3}}.
\end{align*}
%with $2\pi$ the value of the definite integral of $1$ for $\varphi \in[0,2\pi)$ and $2$ the value of the definite integral of $\sin(\vartheta)$ for $\vartheta\in[0,\pi)$.
In summary, this gives
\begin{align*}
    \max_{\bm r\in[-\frac12,\frac12)^3} \left|\tilde\kappa^\text{m}(\bm r)-\tilde\kappa^\text{m}_{\text{RF}}(\bm r)\right| \leq
 	2 \sum_{\bm{k}\in\mathbb{Z}^3\setminus\mathcal{I}_m} |c_{\bm{k}}(\tilde\kappa^\text{m})| \leq \frac{1}{\pi^3\ell} \cdot \frac{8\pi}{m-2\sqrt{3}} = \frac{8}{\pi^2\ell(m-2\sqrt{3})},
\end{align*}
where we used \eqref{eq:error_estimate_mat3d}.
%we obtain
%\begin{align*}
%	\|\kappa_{\text{ERR}}\|_{\infty} \leq \frac{8}{\pi^2\ell(m-2\sqrt{3})}.
%\end{align*}
\end{proof}

In the proof of Theorem \ref{theorem:Fourier_Error_Matern_3d}, the inequality~\eqref{eq:Fourier_transform_matern_3d} indicates that the estimate becomes tighter as the term $\frac{1}{4\pi^2\ell^2}$ becomes negligible compared to $\|\bm\omega\|_2^2$.  Since $\|\bm\omega\|_2 = \|\bm k\|_2 \geq \frac{m}{2}$,  this aligns with the condition $\ell\pi m > 1$. The condition $\ell\pi m > 1$ offers valuable guidance for selecting a suitable expansion degree \(m\) based on the length-scale \(\ell\), although it is not mandatory for the validity of the estimate. In \cite{wagner2024fast}, analogous conditions were crucial for the error estimates of the Gaussian kernel, highlighting their importance in kernel error analysis.
In the next theorem, we analyze the Fourier approximation error for the derivative Mat\'ern\texorpdfstring{\((\frac{1}{2})\)}{} kernel in a very similar fashion.

\begin{theorem}[Fourier Error Estimate for the Periodized Trivariate Derivative Mat\'ern\texorpdfstring{\((\frac{1}{2})\)}{} Kernel] \label{theorem:Fourier_Error_derMatern_3d}
%Let the kernel matrix be defined by the trivariate derivative Mat\'ern\texorpdfstring{\((\frac{1}{2})\)}{} kernel $\kappa_s^{\text{\emph{derM}}}$ in \eqref{eq:def_der_additive_kernels}. Then, for small length-scale kernel parameters $\ell$, the following estimate holds true
For the trivariate derivative Mat\'ern\texorpdfstring{\((\frac{1}{2})\)}{} kernel $\tilde\kappa^\mathrm{derm}$ on $[-\frac12,\frac12)^3$, we have
\begin{align*}
\|\tilde\kappa^\mathrm{derm}_{\mathrm{ERR}}\|_{\infty}
%\max_{\bm r\in[-\frac12,\frac12)^3} \left\|\tilde\kappa^\text{derm}(\bm r)-\tilde\kappa^\text{derm}_\text{RF}(\bm r)\right\|_{\infty}
&\leq \frac{32}{\ell^4\pi^4 3(m-2\sqrt{3})^3} + \frac{8}{\ell^2\pi^2(m-2\sqrt{3})}.
\end{align*}
\end{theorem}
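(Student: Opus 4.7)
The approach mirrors the proof of Theorem \ref{theorem:Fourier_Error_Matern_3d}, but exploits the key identity $\kappa^{\text{derm}}(\bm r) = \partial_\ell \kappa^{\text{m}}(\bm r)$ to avoid redoing a Fourier transform from scratch. Since parameter differentiation with respect to $\ell$ commutes with the Fourier transform in $\bm x$ (justified by dominated convergence, as $\partial_\ell \kappa^{\text{m}}$ admits an $L^1(\mathbb R^3)$-majorant uniform in $\ell$ on any compact interval bounded away from zero), we get $\hat\kappa^{\text{derm}}(\bm\omega) = \partial_\ell \hat\kappa^{\text{m}}(\bm\omega)$. I can therefore differentiate the closed form $\hat\kappa^{\text{m}}(\bm\omega) = \frac{1}{2\pi^3\ell}\bigl(\tfrac{1}{4\pi^2\ell^2}+\|\bm\omega\|_2^2\bigr)^{-2}$ from equation \eqref{eq:Fourier_transform_matern_3d} directly. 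This produces two terms, and dropping the non-negative $\tfrac{1}{4\pi^2\ell^2}$ inside the parenthesis yields the clean upper bound
\begin{equation*}
|\hat\kappa^{\text{derm}}(\bm\omega)| \;\leq\; \frac{1}{2\pi^3\ell^2\,\|\bm\omega\|_2^4} \;+\; \frac{1}{2\pi^5\ell^4\,\|\bm\omega\|_2^6}.
\end{equation*}

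Next I invoke the aliasing estimate \eqref{eq:error_estimate_mat3d}, which is exact for periodized kernels via $c_{\bm k}(\tilde\kappa^{\text{derm}}) = \hat\kappa^{\text{derm}}(\bm k)$, so that $\|\tilde\kappa^{\text{derm}}_{\text{ERR}}\|_\infty \leq 2\sum_{\bm k\in\mathbb Z^3\setminus\mathcal I_m} |\hat\kappa^{\text{derm}}(\bm k)|$. Splitting along the two decay scales $\|\bm k\|_2^{-4}$ and $\|\bm k\|_2^{-6}$ and following the Theorem~\ref{theorem:Fourier_Error_Matern_3d} strategy of relaxing $\mathbb Z^3\setminus\mathcal I_m$ to $\{\bm k \in \mathbb Z^3 : \|\bm k\|_2 \geq m/2\}$ and then majorizing both sums by integrals over $\{\bm x\in\mathbb R^3 : \|\bm x\|_2 \geq R\}$ with $R = \tfrac{m}{2}-\sqrt 3 = \tfrac{m-2\sqrt 3}{2}$, I reduce the problem to two radial integrals.

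Evaluating in spherical coordinates gives the standard formulas $\int_{\|\bm x\|_2 \geq R} \|\bm x\|_2^{-4}\,\mathrm d\bm x = \tfrac{4\pi}{R}$ (which reproduces the value $\tfrac{8\pi}{m-2\sqrt 3}$ used in Theorem \ref{theorem:Fourier_Error_Matern_3d}) and $\int_{\|\bm x\|_2 \geq R} \|\bm x\|_2^{-6}\,\mathrm d\bm x = \tfrac{4\pi}{3R^3} = \tfrac{32\pi}{3(m-2\sqrt 3)^3}$. Multiplying each by the corresponding $\ell$-dependent prefactor $\tfrac{1}{\pi^3\ell^2}$ and $\tfrac{1}{\pi^5\ell^4}$ respectively, and summing, yields exactly the stated bound
\begin{equation*}
\|\tilde\kappa^{\text{derm}}_{\text{ERR}}\|_{\infty} \;\leq\; \frac{32}{\ell^4\pi^4\, 3(m-2\sqrt 3)^3} \;+\; \frac{8}{\ell^2\pi^2(m-2\sqrt 3)}.
\end{equation*}

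The main obstacle is of a technical rather than conceptual nature: it is carefully tracking the two distinct decay rates through the chain ``pointwise bound $\Rightarrow$ sum over $\mathbb Z^3\setminus\mathcal I_m$ $\Rightarrow$ integral bound'' without losing sharpness, and ensuring that the shift from integer lattice to $\mathbb R^3$ (which uses the monotonic decay of both integrands) is done consistently for both the $r^{-4}$ and $r^{-6}$ contributions. Once the two tail integrals are isolated, the remaining work is pure bookkeeping of constants, entirely analogous to the computation at the end of Theorem~\ref{theorem:Fourier_Error_Matern_3d}. A minor secondary point is verifying that, as in the Matérn case, the bound $\tfrac{1}{4\pi^2\ell^2}+\|\bm\omega\|_2^2 \geq \|\bm\omega\|_2^2$ is tight enough when $\ell\pi m > 1$, which gives the same practical guidance on selecting $m$ as in the previous theorem.
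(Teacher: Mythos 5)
Your proof is correct and lands on exactly the stated bound, and the back half (aliasing estimate, relaxing $\mathbb Z^3\setminus\mathcal I_m$ to $\{\|\bm k\|_2\geq m/2\}$, and the two tail integrals $\tfrac{8\pi}{m-2\sqrt3}$ and $\tfrac{32\pi}{3(m-2\sqrt3)^3}$) is identical to the paper's. The only real difference is how you obtain the pointwise Fourier-domain bound: the paper writes $\kappa^{\text{derm}}=\tfrac1\ell\left(\kappa^{\text{m3}}-\kappa^{\text{m}}\right)$ with a Mat\'ern$(\tfrac32)$ kernel at length scale $l=\sqrt3\ell$, computes $\hat\kappa^{\text{m3}}$ from the Mat\'ern spectral density \eqref{eq:spectral_matern}, and applies the triangle inequality; you instead use $\kappa^{\text{derm}}=\partial_\ell\kappa^{\text{m}}$ and differentiate the closed form \eqref{eq:Fourier_transform_matern_3d} under the Fourier transform. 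These are two derivations of the same identity --- your $\partial_\ell\hat\kappa^{\text{m}}(\bm\omega)=-\tfrac{1}{2\pi^3\ell^2}A^{-2}+\tfrac{1}{2\pi^5\ell^4}A^{-3}$ with $A=\tfrac{1}{4\pi^2\ell^2}+\|\bm\omega\|_2^2$ is term-for-term the paper's $\tfrac1\ell\bigl(\hat\kappa^{\text{m3}}-\hat\kappa^{\text{m}}\bigr)$ --- so the two bounds agree exactly. Your route is arguably more self-contained (no need to recognize the Mat\'ern$(\tfrac32)$ connection or invoke the general spectral density formula, only a routine dominated-convergence justification for differentiating under the integral), while the paper's route makes the structural link to higher-order Mat\'ern kernels explicit, which it exploits in the later generalization discussion.
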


\begin{proof}
%Let us denote $g(\bm{x}) \coloneqq \tfrac{\|\bm{x}\|_2}{\ell^2}\e^{-\|\bm{x}\|_2/\ell}$, with $\ell>0$.
Let us have a look at the Mat\' ern\texorpdfstring{$(\frac{1}{2})$}{} kernel $\kappa^{\text{m}}(\bm{x}) = \e^{-\|\bm{x}\|_2/\ell}$ and the Mat\'ern\texorpdfstring{$(\frac{3}{2})$}{} kernel $\kappa^{\text{m3}}(\bm{x}) \coloneqq \left( 1 + \tfrac{\sqrt{3}\|\bm{x}\|_2}{l} \right) \e^{-\sqrt{3}\|\bm{x}\|_2/l}$, with $l=\sqrt{3}\ell$. 
Then, the trivariate derivative Mat\'ern\texorpdfstring{$(\frac{1}{2})$}{} kernel $\kappa^\text{derm}(\bm x)=\tfrac{\|\bm{x}\|_2}{\ell^2}\e^{-\|\bm{x}\|_2/\ell}$ can be expressed as $\kappa^\text{derm}(\bm{x}) = \tfrac{1}{\ell} \left[ \kappa^{\text{m3}}(\bm{x}) - \kappa^{\text{m}}(\bm{x}) \right]$. By \cite{williams2006gaussian}, the Fourier transform of the Mat\'ern class of functions is given by
\begin{align}\label{eq:spectral_matern}
	S(\bm{\omega}) = \frac{2^d \pi^{d/2} \Gamma(\nu+ d/2)(2\nu)^{\nu}}{\Gamma(\nu)\ell^{2\nu}} \left( \tfrac{2\nu}{\ell^2} + 4\pi^2 \|\bm{\omega}\|_2^2\right)^{-(\nu+d/2)}.
\end{align}
For $\nu=3/2$ and $d=3$, we obtain the Fourier transform of the Mat\'ern\texorpdfstring{$(\frac{3}{2})$}{} kernel
\begin{align*}
    \hat\kappa^{\text{m3}} (\bm{\omega}) &= \frac{2^3 \pi^{3/2} \Gamma(3) 3^{3/2}}{\Gamma\left(\tfrac{3}{2}\right) l^3} \left( \tfrac{3}{l^2} + 4\pi^2\|\bm{\omega}\|_2^2 \right)^{-3}
    = \frac{2^5 \pi 3^{3/2}}{l^3} \frac{1}{\left(\tfrac{3}{l^2} + 4\pi^2\|\bm{\omega}\|_2^2 \right)^3} \\
    &= \frac{3^{3/2}}{2l^3 \pi^5} \frac{1}{\left(\tfrac{3}{4\pi^2l^2} + \|\bm{\omega}\|_2^2 \right)^3}
    = \frac{1}{2\ell^3 \pi^5} \frac{1}{\left(\tfrac{1}{4\pi^2\ell^2} + \|\bm{\omega}\|_2^2 \right)^3}
    \leq \frac{1}{2\ell^3 \pi^5\|\bm{\omega}\|_2^6}, 
\end{align*}
with $\Gamma(3)=2$, $\Gamma\left(\tfrac{3}{2}\right) = \tfrac{1}{2} \sqrt{\pi}$ and inserting $l=\sqrt{3}\ell$, where the inequality is tighter if $0 < \tfrac{1}{4\pi^2\ell^2}$ is small compared to $\|\bm{\omega}\|_2^2$.
%For small length-scale values, $\kappa^{\text{m3}}$ can now be periodized via the periodic summation $\widetilde{\kappa^{\text{m3}}}$, see \eqref{eq:matern3d_periodization}.

%By \eqref{eq:per_matern_Fourier_coefficients}
By \eqref{eq:error_estimate_mat3d} and analogously to the proof of Theorem~\ref{theorem:Fourier_Error_Matern_3d}, we can now estimate the Fourier approximation error of the periodized kernel $\tilde\kappa^\text{m3}$ as
\begin{align*}
    2 \sum_{\bm{k}\in\mathbb{Z}^3\setminus\mathcal{I}_m} |c_{\bm{k}}\left(\tilde\kappa^{\text{m3}}\right)| &\leq \frac{2}{2\ell^3\pi^5} \sum_{\bm{k}\in\mathbb{Z}^3\setminus\mathcal{I}_m} \frac{1}{\|\bm{k}\|_2^6}
    %\leq \frac{1}{\ell^3\pi^5} \sum_{\substack{\bm{k}\in\mathbb{Z}^3 \\ \|\bm{k}\|_2\geq m/2}} \frac{1}{\|\bm{k}\|_2^6}
    \leq \frac{1}{\ell^3\pi^5} \int_{\substack{\bm{x}\in\mathbb{R}^3 \\ \|\bm{x}\|_2 \geq m/2-\sqrt{3}}} \frac{1}{\|\bm{x}\|_2^6} \, \mathrm{d}\bm{x}.
\end{align*}
Introducing polar coordinates in three dimensions, we compute the last integral as
\begin{align*}
	\int_{\substack{\bm{x}\in\mathbb{R}^3 \\ \|\bm{x}\|_2 \geq m/2-\sqrt{3}}} \frac{1}{\|\bm{x}\|_2^6} \, \mathrm{d}\bm{x} &= \int_{m/2-\sqrt{3}}^\infty \int_{0}^{2\pi} \int_{0}^\pi \frac{r^2 \sin(\vartheta)}{r^6} \, \mathrm{d}\vartheta \,\mathrm{d} \varphi \,\mathrm{d}r
	%= 2\pi \cdot 2 \cdot \int_{m/2-\sqrt{3}}^\infty \frac{1}{r^4} \, \mathrm{d} r \\
	%&= 4\pi \left[ -\frac{1}{3r^3}\right]_{m/2-\sqrt{3}}^\infty = \frac{4\pi}{3(m/2-\sqrt{3})^3}
	= \frac{2^5\pi}{3(m-2\sqrt{3})^3}.
\end{align*}
Putting everything together gives
\begin{align*}
	\|\tilde\kappa^\text{m3}_{\text{ERR}}\|_{\infty} \leq 2 \sum_{\bm{k}\in\mathbb{Z}^3\setminus\mathcal{I}_m} |c_{\bm{k}}(\tilde\kappa^{\text{m3}})|
	\leq \frac{1}{\ell^3\pi^5} \frac{2^5\pi}{3(m-2\sqrt{3})^3}
	= \frac{2^5}{\ell^3\pi^4 3(m-2\sqrt{3})^3}
\end{align*}
for the Mat\'ern\texorpdfstring{$(\frac{3}{2})$}{} kernel. By the triangle inequality,
\begin{align*}
|\hat{\kappa}^\text{derm}(\bm{\omega})| = \tfrac{1}{\ell} \left| \hat\kappa^{\text{m3}}(\bm{\omega}) - \hat\kappa^{\text{m}}(\bm{\omega}) \right| \leq \tfrac{1}{\ell} \left( \left| \hat\kappa^{\text{m3}}(\bm{\omega}) \right| + \left| \hat\kappa^{\text{m}}(\bm{\omega})\right| \right),
\end{align*}
that is, with the error estimate from Theorem~\ref{theorem:Fourier_Error_derMatern_3d}
\begin{align*}
	\|\tilde\kappa^\text{derm}_{\text{ERR}}\|_{\infty}
    %&= \|\tfrac{1}{\ell} \left( \tilde\kappa_{\text{ERR}}(\tilde\kappa^{\text{m3}}) - \kappa_{\text{ERR}}(\kappa^{\text{m}}) \right) \|_{\infty} \leq \tfrac{1}{\ell} \left( \|\kappa_{\text{ERR}}(\kappa^{\text{m3}})\|_{\infty} + \|\kappa_{\text{ERR}}(\kappa^{\text{m}})\|_{\infty} \right) \\
	&\leq \frac{32}{\ell^4\pi^4 3(m-2\sqrt{3})^3} + \frac{8}{\ell^2\pi^2(m-2\sqrt{3})}.
\end{align*}
\end{proof}

Similar to the discussion after Theorem~\ref{theorem:Fourier_Error_Matern_3d}, the presented error bound for the periodized derivative kernel is tight if \(\ell\pi m>1\).
%again acts as a practical guideline to establish an optimal relationship between the expansion degree \(m\) and the length-scale \(\ell\).
%\textcolor{red}{Yuanzhe: I do not see the ratio $\eta$ in the proof of Theorem 4.2. Why do we still have this statement? Do we need to change to a different ratio?} \textcolor{blue}{Franziska: should be more clear now ... see discussion after previous Theorem}
%\textcolor{red}{Yuanzhe: One thing I would like to point out is that when we first introduce the notation $\kappa$, this function has two inputs while later we rewrite $\kappa$ or its variants as the function with only one input $\bm r$. I did not completely fix this issue.}
%\textcolor{blue}{F: Maybe, we should introduce the kernel at the start of the paper as a function $\kappa=\kappa(r)$ with one input variable. And we define $\kappa(x,y)\coloneqq\kappa(x-y)$ whenever two inputs are stated. This should be the cleanest way. I've written a suggestion on the first page. Hope its okay.}
%\textcolor{red}{Yuanzhe:
%Also I did not check the details in the proofs of the Lemmas and Theorems.
%In a few places, we use $b_{\bm r}$ instead of $c_{\bm r}$ to denote the FFT coefficients. I thought it was a typo until I read Appendix A1 and mistakenly replaced all of them to $c_{\bm k}$. Sorry, please help me to correct them when you do the final proofreading.}
%\textcolor{blue}{F: Yes, the $b_k$ denote the discrete Fourier coefficients while the $c_k$ stand for the exact or rather analytic Fourier coefficients. I will revise all the Fourier parts again in order to fix this issue.}

\subsection{Total Fourier Approximation Error}
Lemmas \ref{lemma1}--\ref{lemma2} quantify the error between the kernels $\kappa\in\{\kappa^\text{m},\kappa^\text{derm}\}$ and their corresponding periodizations $\tilde\kappa$, while Theorems \ref{theorem:Fourier_Error_Matern_3d}--\ref{theorem:Fourier_Error_derMatern_3d} provide the error estimate between these periodized kernels $\tilde\kappa$ and their Fourier approximations $\tilde\kappa_\text{RF}$. Thus, we can analyze the error between the original kernel $\kappa$ and its Fourier approximation $\kappa_\text{RF}$ as follows:
\begin{align}
    \left|\kappa-\kappa_\text{RF}\right|
    \leq& \left|\kappa-\tilde\kappa\right| + \left| \tilde\kappa - \tilde\kappa_\text{RF} \right|+\left|\kappa_{\text{RF}}-\tilde\kappa_{\text{RF}}\right|,
    \label{eq:3sums}
\end{align}
where the last term on the right-hand side of the above inequality is the difference between the two Fourier approximations and can be expanded as
\begin{equation}\label{eq:breve_kappa_RF_aliasing}
\kappa_\text{RF}(\bm r)-\tilde\kappa_\text{RF}(\bm r)
= \sum_{\bm k\in\mathcal I_m} b_{\bm k}(\kappa-\tilde\kappa)\,\e^{2\pi\mathrm i\bm k^\intercal\bm r}
= \sum_{\bm k\in\mathcal I_m} \left(\sum_{\bm l\in\mathbb Z^3} c_{\bm k+\bm l m}(\kappa-\tilde\kappa)\right)\,\e^{2\pi\mathrm i\bm k^\intercal\bm r},
\end{equation}
where we make use of the aliasing formula.

Similar to $|\kappa-\tilde\kappa|$, $|\kappa_\text{RF}-\tilde\kappa_\text{RF}|$ will be negligibly small for small $\ell$. This is because  the function $\kappa-\tilde\kappa$ is differentiable on $[-\frac12,\frac12)^3$ and its analytic Fourier coefficients $c_{\bm k}(\kappa-\tilde\kappa)$ tend to zero with order $(k_1k_2k_3)^{-2}$. 
This can be achieved by partial integration two times with respect to each dimension. In 1D this gives for $k\neq0$
\begin{align*}
    c_k(\breve\kappa)=
    \int_{-\frac12}^{\frac12} \breve\kappa(x)\e^{2\pi\i kx}\dx
    &=\frac{1}{(2\pi\i k)^2}\left(
    \left[ \breve\kappa'(x)\e^{2\pi\i kx} \right]_{-\frac12}^{\frac12}
    +\int_{-\frac12}^{\frac12} \breve\kappa''(x)\e^{2\pi\i kx}\dx
    \right)\\
    &=-\frac{1}{4\pi^2k^2}\left(
    2(-1)^k\breve\kappa'(\tfrac12) +\int_{-\frac12}^{\frac12} \breve\kappa''(x)\e^{2\pi\i kx}\dx,
    \right)
\end{align*}
where we use the shorthand notation \(\breve\kappa\coloneqq\kappa-\tilde\kappa\) and 
\[
\breve\kappa'(x)=\frac{1}{\ell}\sum_{n\neq 0} \mathrm{sign}(x+n)\,\e^{-|x+n|/\ell}
\quad\text{ and }\quad 
\breve\kappa'(x)=\frac{1}{\ell^2}\sum_{n\neq 0} \e^{-|x+n|/\ell}
\]
for \(\kappa=\kappa^\text{m}\), for instance, and all \(x\in[-\frac12,\frac12)\).
Using \eqref{eq:error_f_ftilde_1d}, we see 
\[
|c_k(\breve\kappa^\text{m})| \leq \frac{1}{4\pi^2k^2}\left(
\frac{2(1+2\ell)\e^{-1/(2\ell)}}{\ell} + \frac{(1+2\ell)\e^{-1/(2\ell)}}{\ell^2}
\right)
=\frac{C(\ell)}{k^2},
\]
where \(k\neq0\) and the constant \(C(\ell)\) becomes small with \(\ell\to0\).
Also note that for \(k=0\), we obtain with \eqref{eq:error_f_ftilde_1d}
\[
|c_0(\breve\kappa^\text{m})|=\left|\int_{-\frac12}^{\frac 12}\breve\kappa^\text{m}(x)\,\e^{-2\pi\i kx}\dx\right|
\leq 1\cdot\max_{x\in\left[-\frac12,\frac12\right)} |\breve \kappa^\text{m}(x)|
\leq \e^{-1/(2\ell)}(1+2\ell),
\]
which also becomes small with \(\ell\to0\).
Finally, using \eqref{eq:breve_kappa_RF_aliasing}, we see 
\begin{equation}\label{eq:estimate_breve_kappa_RF}
    |\kappa^\text{m}_\text{RF}(r)-\tilde\kappa^\text{m}_\text{RF}(r)|
    \leq \sum_{k\in\mathcal I_m} |b_k(\breve\kappa_\text{RF}^\text{m})|\leq \sum_{k\in\mathbb Z} |c_k(\breve\kappa^\text{m})|\leq \e^{-1/(2\ell)}(1+2\ell)+C(\ell)\sum_{k\neq0} \frac{1}{k^2},
\end{equation}
which is obviously finite and approaches zero in \(o(\ell^{-2}\,\e^{-1/\ell})\) as \(\ell\to0\), where for small \(\ell\) the dominating term \(\simeq \ell^{-2}\e^{-1/\ell}\) originates from the constant \(C(\ell)\) defined above. 

In three dimensions, the computations become tedious and more technical.
We obtain that \(|c_{\bm k}(\kappa_\text{ERR})|\leq C_\text{3D}(\ell)(k_1k_2k_3)^{-2}\) for \(k_1\neq0\), \(k_2\neq0\) and \(k_3\neq 0\), which is achieved by integrating partially two times with respect to each dimension in order to transform the triple integral.
For \(\bm k=\bm 0\), we can estimate \(c_{\bm 0}(\breve\kappa)\) using Lemma~\ref{lemma1} or Lemma~\ref{lemma2}.
If some of the entries in \(\bm k\) are zero and some not, we will have to combine partial integration two times with respect to all dimensions \(j\) with \(k_j\neq0\) and estimate the maximum values of the computed partial derivatives in the integral in order derive an upper bound of the corresponding Fourier coefficients.
Analogously to \eqref{eq:estimate_breve_kappa_RF}, we end up with 
\[
|\breve\kappa^\text{m}_\text{RF}(\bm r)|
\leq |c_{\bm 0}(\breve\kappa^\text{m})|+\sum_{k_1\in\mathbb Z\setminus\{0\}} \frac{C_{1D}(\ell)}{k_1^2} + \ldots + \sum_{(k_1,k_2)\in\mathbb Z^2 \atop k_1\neq0\neq k_2} \frac{C_{2D}(\ell)}{k_1^2k_2^2}+\ldots + \sum_{\bm k\in\mathbb Z^3\atop k_j\neq0, j=1,2,3} \frac{C_{3D}(\ell)}{k_1^2k_2^2k_3^2},
\]
where all the constants, including $c_{\bm 0}(\breve\kappa^\text{m})$ approach zero as $\ell\to0$ with $o(\ell^{-6}\e^{-1/\ell})$, and all the series have finite values.
%Thus, we can estimate
%\textcolor{red}{Yuanzhe: My dumb question is that we are looking for the estimate for the coefficients of $c_k(\kappa-\tilde\kappa)$ above Line 353 or $c_k(\kappa_\text{ERR})$?  The definition of $\kappa_\text{ERR}(\cdot,\cdot)$ is $\kappa_\text{ERR}(\cdot,\cdot)=\kappa-\kappa_\text{RF}$ from Line 224 instead of $\kappa-\tilde\kappa$. In other words, why does the first inequality hold in the next equation instead of  $|\kappa_\text{RF}(\bm r)-\tilde\kappa_\text{RF}(\bm r)|
%\leq \sum_{\bm k\in\mathcal I_m} |c_{\bm k}(\kappa-\tilde{\kappa})|$?} 
%\textcolor{blue}{F: Oh, sorry. You are right. I mixed up notations. Of course, instead of $\kappa_\text{ERR}$ I actually meant $\kappa-\tilde\kappa$... will fix this!} 
%$$
%|\kappa_\text{RF}(\bm r)-\tilde\kappa_\text{RF}(\bm r)|
%\leq \sum_{\bm k\in\mathcal I_m} |b_{\bm k}(\kappa_\text{ERR})|
%\leq \sum_{\bm k\in\mathbb Z^3} |c_{\bm k}(\kappa_\text{ERR})|
%\leq |c_{\bm 0}(\kappa_\text{ERR})| + C_\text{3D}(\ell) \sum_{\bm k\in\mathbb Z^3} \frac{1}{k_1^2k_2^2k_3^2},
%$$
%which is obviously finite and approaches zero as $\ell\to0$.
This fully justifies that \(|\kappa^\text{m}-\kappa^\text{m}_\text{RF}|\approx |\tilde\kappa^\text{m}-\tilde\kappa^\text{m}_\text{RF}|\) for small \(\ell\).
The same argumentation can be followed for the derivative kernel \(\kappa^\text{derm}\).
%\textcolor{blue}{F: I think for the last term in the triangle inequality (in case of small $\ell$) one can show $o(\ell^{-6}\e^{-1/\ell})$ for the Mat\'ern, for derivative Mat\'ern I'm not sure yet. I will check!}
%\textcolor{red}{Yuanzhe: Is $|c_{\bm 0}(\kappa_\text{ERR})|$ very small independent of $l$?}
%\textcolor{blue}{F: you are right, an additional comment is needed for the zeroth Fourier coefficient. Will add this!}

\begin{remark}
\label{rem::bounds}
For large values of \(\ell\), the dominant terms on the right-hand side of equation \eqref{eq:3sums} are \(\left|\kappa - \tilde{\kappa} \right|\) and \(\left|\kappa_{\text{RF}}-\tilde{\kappa}_{\text{RF}}\right|\).
Thus, \eqref{eq:3sums} can no longer be used to justify the use of $\Vert\tilde{\kappa}-\tilde{\kappa}_{\text{RF}}\Vert_\infty$ as the error estimate when $\ell$ is large.
However, in this case, the trivariate Mat\'ern kernel \(\kappa^\text{m}\) approximates a constant function with value \(1\) across the entire cube \([-\frac12,\frac12)^3\), while the periodized function \(\tilde{\kappa}^\text{m}\) approaches \(8\pi\ell^3\).
As a result, the difference \(\kappa-\tilde{\kappa}\) is approximately \(1 - 8\pi\ell^3\).
When the difference between two functions is almost constant, all other Fourier coefficients will align closely, except the zeroth coefficient. For the trivariate Mat\'ern kernel \(\kappa^\text{m}\), it can be shown that $|\kappa^\text{m}-\tilde{\kappa}^\text{m}-1+8\pi\ell^3|\simeq \frac1\ell$. Therefore, the Fourier approximation error for large \(\ell\) can be estimated by the following new inequality:
\[
|\kappa^\text{m}-\kappa^\text{m}_{\text{RF}}|\leq
\underbrace{
|\kappa^\text{m}-\tilde{\kappa}^\text{m}-1+8\pi\ell^3|}_{\text{small for large $\ell$}}
+
|\tilde{\kappa}^\text{m}-\tilde{\kappa}^\text{m}_{\text{RF}}|
+
\underbrace{
|\kappa^\text{m}_{\text{RF}}-\tilde{\kappa}^\text{m}_{\text{RF}}-1+8\pi\ell^3|}_{\text{small for large $\ell$}},
\]
where the last term can be analyzed analogously to the term \(\kappa^\text{m}_{\text{RF}}-\tilde{\kappa}^\text{m}_{\text{RF}}\), following the approach used for small $\ell$.
A similar argument also holds for the derivative kernels. This shows $\Vert\tilde{\kappa}-\tilde{\kappa}_{\text{RF}}\Vert_\infty$ can still be used as an error estimate for large $\ell$.
\iffalse
All the constants from above can still be used, but another estimate for the zeroth Fourier coefficient $|c_{\bm 0}(\breve\kappa^\text{m}-1+8\pi\ell^3)|\leq \max|\breve\kappa^\text{m}-1+8\pi\ell^3|$ has to be derived based on estimating the first part in this new inequality.
This can be done as follows: \textcolor{blue}{F: So this is the analog to the first lemma. Shall we include this computation here for the sake of completeness or is it too much? \(\breve\kappa\coloneqq\kappa-\tilde\kappa\)}
\begin{align*}
    \left|\breve\kappa^\text{m}(\bm r)-1+8\pi\ell^3\right|
    &\leq \left|\kappa^\text{m}(\bm r)-1\right| + \left|\tilde\kappa^\text{m}(\bm r)-8\pi\ell^3\right|
    \leq 1-\e^{-\sqrt3/(4\ell)} + \sum_{\bm k\in\mathbb Z^3\setminus\{\bm 0\}} c_{\bm k}(\tilde\kappa^\text{m}) \\
    &= 1-\e^{-\sqrt3/(4\ell)} + \sum_{0<\|\bm k\|_2<2} c_{\bm k}(\tilde\kappa^\text{m}) + \sum_{\|\bm k\|_2\geq2} c_{\bm k}(\tilde\kappa^\text{m}) \\
    &\leq 1-\e^{-\sqrt3/(4\ell)} + \frac{26}{2\pi^3\ell\cdot 1} + \frac{4}{\pi^2\ell(4-2\sqrt{3})} \simeq \frac1\ell,
\end{align*}
where the maximum of the first part is obtained in $\bm r=\left(-\frac12,-\frac12,-\frac12\right)$ and for the second estimate we just make use of the computations in the proof of Theorem~\ref{theorem:Fourier_Error_Matern_3d} with $m=4$.
\fi 
%\textcolor{red}{Yuanzhe: Feel free to modify/expand this remark.}
\end{remark}

To validate the accuracy of our derived error estimates for $\left\Vert\tilde\kappa - \tilde\kappa_\text{RF} \right\Vert_{\infty}$, we numerically compare them with the actual errors $\left\Vert\kappa-\kappa_\text{RF}\right\Vert_{\infty}$ in the Fourier approximations. For the trivariate Mat\'ern\texorpdfstring{\((\frac{1}{2})\)}{} kernel and its derivative, we generate $n=10^4$ uniformly distributed random points $\bm{x}_{i} \in [-\tfrac{1}{4},\tfrac{1}{4})^3$ and evaluate the Mat\'ern\texorpdfstring{\((\frac{1}{2})\)}{} kernel $\kappa^{\text{m}}(\bm{r}_{ij}) = \e^{-\|\bm{r}_{ij}\|_2/\ell}$ and its derivative kernel $\kappa^{\text{derm}}(\bm{r}_{ij}) = \tfrac{\|\bm{r}_{ij}\|_2}{\ell^2} \e^{-\|\bm{r}_{ij}\|_2/\ell}$ at $\bm{r}_{ij}=\bm{x}_i-\bm{x}_j$, $i,j=1, \dots, n$, for different values of $\ell$. For the approximation, we compute the discrete Fourier coefficients on a regular grid of $m^3$ points in $[-\tfrac{1}{2},\tfrac{1}{2})^3$, where we choose $m \in \{16,32,64\}$.

% Figure Comparison Real Errors vs. Error Bounds: Matérn(1/2) 3D
\begin{figure}[!ht]
	\centering
	% Matern(1/2)
	\begin{tikzpicture}
	\begin{axis}[width=0.32\linewidth,
	%xmin=0.2,xmax=2.5,ymin=1.0e-15,ymax=10,
	ymin=1e-5,ymax=1e+2,
	ymode=log,xmode=log,
	legend style={legend pos=south east, nodes=right},
	ylabel={\(\|\kappa_\text{ERR}^{\text{m}}\|_\infty\)},
	title={\(m=16\)}
	]
	\addplot[smooth,color=blue,line width=0.5pt,densely dashed] table[x=ell,y=matM16] {estimates_mat3d.txt};
	\addplot[smooth,color=blue,line width=0.5pt] table[x=ell,y=matM16] {measured_errors_mat3d.txt};
	%\legend{{$m=16$},{$m=32$},{$m=64$}};
	\end{axis}
	\end{tikzpicture}
	\begin{tikzpicture}
	\begin{axis}[width=0.32\linewidth,
	%xmin=0.2,xmax=2.5,ymin=1.0e-15,ymax=10,
	ymin=1e-5,ymax=1e+2,
	ymode=log,xmode=log,
	legend style={legend pos=south east, nodes=right},
	title={\(m=32\)}
	]
	\addplot[smooth,color=blue,line width=0.5pt,densely dashed] table[x=ell,y=matM32] {estimates_mat3d.txt};
	\addplot[smooth,color=blue,line width=0.5pt] table[x=ell,y=matM32] {measured_errors_mat3d.txt};
	\end{axis}
	\end{tikzpicture}
	\begin{tikzpicture}
	\begin{axis}[width=0.32\linewidth,
	%xmin=0.2,xmax=2.5,ymin=1.0e-15,ymax=10,
	ymin=1e-5,ymax=1e+2,
	ymode=log,xmode=log,
	legend style={legend pos=south east, nodes=right},
	title={\(m=64\)}
	]
	\addplot[smooth,color=blue,line width=0.5pt,densely dashed] table[x=ell,y=matM64] {estimates_mat3d.txt};
	\addplot[smooth,color=blue,line width=0.5pt] table[x=ell,y=matM64] {measured_errors_mat3d.txt};
	\end{axis}
	\end{tikzpicture}
	% derivative Matern(1/2) kernel
	\begin{tikzpicture}
	\begin{axis}[width=0.32\linewidth,
	%xmin=0.2,xmax=2.5,ymin=1.0e-15,ymax=10,
	ymin=1e-7,ymax=1e+6,
	ymode=log,xmode=log,
	legend style={legend pos=south east, nodes=right},
	xlabel={\(\ell\)},
	ylabel={\(\|\kappa_\text{ERR}^{\text{derm}}\|_\infty\)}
	]
	\addplot[smooth,color=blue,line width=0.5pt,densely dashed] table[x=ell,y=dermatM16] {estimates_mat3d.txt};
	\addplot[smooth,color=blue,line width=0.5pt] table[x=ell,y=dermatM16] {measured_errors_mat3d.txt};
	%\legend{{$m=16$},{$m=32$},{$m=64$}};
	\end{axis}
	\end{tikzpicture}
	\begin{tikzpicture}
	\begin{axis}[width=0.32\linewidth,
	%xmin=0.2,xmax=2.5,ymin=1.0e-15,ymax=10,
	ymin=1e-7,ymax=1e+6,
	ymode=log,xmode=log,
	legend style={legend pos=south east, nodes=right},
	xlabel={\(\ell\)},
	]
	\addplot[smooth,color=blue,line width=0.5pt,densely dashed] table[x=ell,y=dermatM32] {estimates_mat3d.txt};
	\addplot[smooth,color=blue,line width=0.5pt] table[x=ell,y=dermatM32] {measured_errors_mat3d.txt};
	%\legend{{$m=16$},{$m=32$},{$m=64$}};
	\end{axis}
	\end{tikzpicture}
	\begin{tikzpicture}
	\begin{axis}[width=0.32\linewidth,
	%xmin=0.2,xmax=2.5,ymin=1.0e-15,ymax=10,
	ymin=1e-7,ymax=1e+6,
	ymode=log,xmode=log,
	legend style={legend pos=south east, nodes=right},
	xlabel={\(\ell\)},
	]
	\addplot[smooth,color=blue,line width=0.5pt,densely dashed] table[x=ell,y=dermatM64] {estimates_mat3d.txt};
	\addplot[smooth,color=blue,line width=0.5pt] table[x=ell,y=dermatM64] {measured_errors_mat3d.txt};
	%\legend{{$m=16$},{$m=32$},{$m=64$}};
	\end{axis}
	\end{tikzpicture}
	\caption[Comparison of the measured true Fourier approximation errors and the corresponding error bounds in three dimensions.]{Comparison of the measured true Fourier approximation errors (solid lines) for the periodically continued kernels $\kappa_{\text{R}}$ and the corresponding error estimators (dashed lines) as stated in Theorems~\ref{theorem:Fourier_Error_Matern_3d} and \ref{theorem:Fourier_Error_derMatern_3d} for the periodized kernels $\tilde{\kappa}$ in three dimensions.
		The results for the Mat\'ern\texorpdfstring{\((\frac{1}{2})\)}{} kernel are depicted in the first row and for the derivative Mat\'ern\texorpdfstring{\((\frac{1}{2})\)}{} kernel in the second row of the plot. The grid size $m$ is fixed to $m=16$, $m=32$, or $m=64$ (from left to right).
		\label{fig:error_estimates_mat3d}
	}
\end{figure}
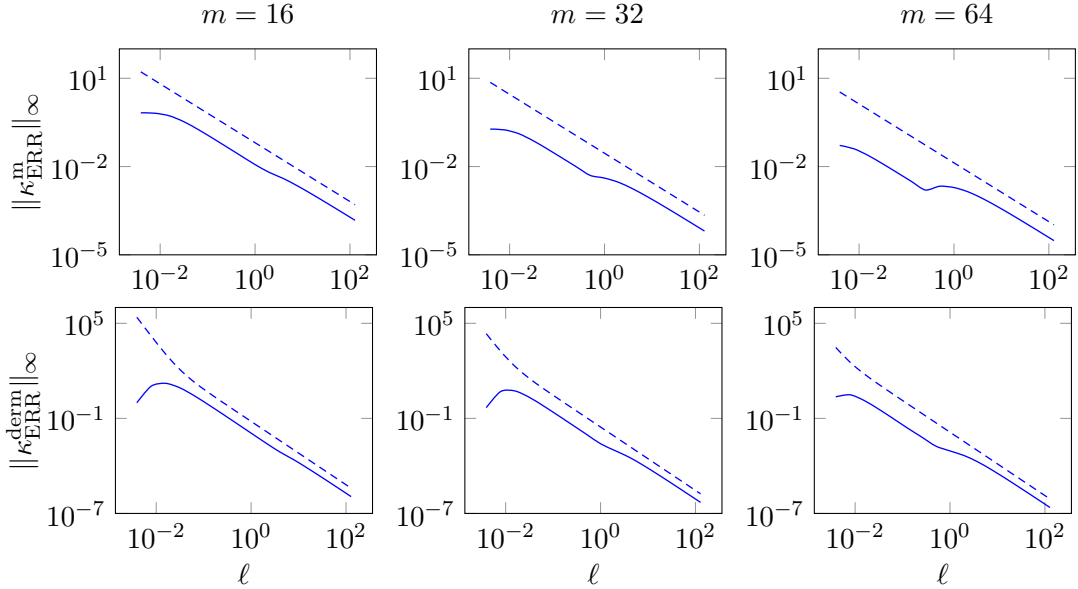

% \textcolor{red}{Yuanzhe: Need some help to expand this! We need to relate these figures to the bound and compare them horizontally. For example, does the estimate actually become tighter as $m$ increases for a fixed $\ell$? Do the estimates have the same order as the true error?}
In Figure~\ref{fig:error_estimates_mat3d}, the corresponding results are presented for different values of $m$ and $\ell$, where the dashed lines represent the estimates established in Theorems~\ref{theorem:Fourier_Error_Matern_3d} and \ref{theorem:Fourier_Error_derMatern_3d} and the solid lines the measured errors as explained above. The error estimates can serve as good indicators for the true error, especially when $m\ell\pi>1$. For very small length-scales, choosing $m=16$ is not large enough, and the error estimator can be up to five orders of magnitude larger than the true error. For moderate length-scale values, the error estimator exceeds the true error by only one to two orders of magnitude. It should be noted that, even for the largest length-scales examined in this experiment, the error estimator remains a valid upper bound for the actual error (confer Remark~\ref{rem::bounds}). Note that for fixed $\ell$ and increasing $m$ the errors decrease, albeit slowly, which is supported by the estimates established above. 
% \textcolor{red}{Yuanzhe: For the very small $\ell$, I remember the Ph.D. thesis mentioned one trick to modify the value by computing the exact ones in a small neighborhood around the origin. If this is also used here, can we mention this to improve the accuracy for very small $\ell$?}

\subsection{Generalization to Mat\'ern Kernels of Different Order}
The methods of periodization and NFFT-accelerated kernel operations are not limited to specific Mat\'ern kernels but can be extended to further frequently used kernels.
Our implementation relies on the NFFT-based fast summation algorithm, which is part of the publicly available NFFT software library~\cite{KeKuPo09}.
In addition to the Gaussian kernel and the Mat\'ern\texorpdfstring{\((\frac{1}{2})\)}{} kernel considered in this paper, this software also includes other kernels such as for example the (inverse) multiquadric \(\kappa(\bm x,\bm y)=\left(\|\bm x-\bm y\|^2+c^2\right)^{\pm1/2}\) or the sinc kernel \(\kappa(\bm x,\bm y)=\|\bm x-\bm y\|^{-1} \sin(c \|\bm x-\bm y\|)\), just to mention a few.
The well-known Mat\'ern kernels of various orders, defined in terms of the modified Bessel function of second kind, would also be suitable for the method and could easily be added to the list of possible kernels.
%The considered kernels are radial and, thus, the periodic continuation of such a kernel on \([-\frac12,\frac12)^d\) will be a continuous and almost everywhere differentiable function, which is uniformly approached by its Fourier partial sums. 
In addition, these more general kernels are also integrable and, thus, the periodization \eqref{eq:periodic_summation} is well defined, too.
We can therefore proceed in the same way in order to derive error estimates for these kernels.

\section{Numerical Experiments} \label{sec:Experiments}

In this section, we evaluate the accuracy and performance of the NFFT-accelerated additive GP method.
%1) the approximation accuracy of the NFFT-accelerated kernel operations; 2) the performance of the NFFT-accelerated GP for low-dimensional datasets; 3) the effectiveness of \texttt{AAFN} preconditioner with additive kernel matrices; and 4) the performance of the NFFT-accelerated additive GP for high-dimensional datasets.
%The algorithms proposed in this paper have been implemented in C 
%\footnote{\href{https://anonymous.4open.science/r/nfft_additive_gp-B08E}{https://anonymous.4open.science/r/nfft\_additive\_gp-B08E}}. 
The proposed method has been implemented both in \texttt{C} and in \texttt{MATLAB}.
Our \texttt{C} implementation is based on the \texttt{FFTW} package \cite{frigo2005design} and the \texttt{NFFT} package \cite{KeKuPo09}.
We run our experiments on an Ubuntu 20.04.4 LTS machine equipped with 755 GB of system memory and a 24-core 3.0 GHz Intel Xeon Gold 6248R CPU.
Our \texttt{MATLAB} tests use \texttt{MATLAB} version R2024b.
Our code is compiled with the GCC 9.4.0 compiler and takes advantage of shared memory parallelism using \texttt{OpenMP}.
We use the parallel \texttt{BLAS} and \texttt{LAPACK} implementation in the \texttt{OpenBLAS} library for basic matrix operations.
We fixed the parameter $m$ in NFFT to $32$ in our experiments, and executed all our experiments in double precision.
%\textcolor{blue}{Add some comments concerning implementation and link/cite used software ... fastadjacency packages!? also need to emphasize preconditioning}

\subsection{\texttt{AAFN} Preconditioned Iterations} \label{subsec:Preconditioned CG}

In our first set of experiments, we evaluate the performance of the \texttt{AAFN} preconditioned CG and stochastic trace estimation for both Gaussian kernel and Mat\'ern\texorpdfstring{\((\frac{1}{2})\)}{} kernel for different $\ell$.
For these experiments, we generated a synthetic dataset $\mathcal{X} \subset \mathbb{R}^6$. This dataset contains $3000$ points sampled uniformly at random within a hypercube of side length $\sqrt[3]{3000}$. The window was set to be $[[1,2,3],[4,5,6]]$ for simplicity.

We set the kernel hyperparameters $\sigma_f^2=\frac{1}{P}$ and noise variance $\sigma_\varepsilon^2=0.01$. The length scale $\ell$ was varied over a range emphasizing the ``middle rank'' regime, where convergence of the unpreconditioned CG is typically slow. For solving the resulting linear systems, the right-hand side vector had elements drawn uniformly from $[-0.5, 0.5]$. We compared the standard CG method against \texttt{AAFN} preconditioned CG. The \texttt{AAFN} preconditioner was configured with a maximum rank of $300$ and a maximum Schur complement fill level of $100$. Both solvers were iterated until the relative residual norm was reduced below a tolerance of $10^{-4}$, or up to a maximum of $200$ iterations.

Figure~\ref{fig:aafn-cg} shows that unpreconditioned CG converges rapidly when the length scale $\ell$ is either very small or very large. However, convergence becomes challenging for unpreconditioned CG over a wide intermediate range of $\ell$ for both the Gaussian and Mat\'ern kernels. The \texttt{AAFN} preconditioner significantly reduces the number of iterations required to achieve convergence across this challenging range, with the effect being particularly pronounced for the Mat\'ern kernel.

\begin{figure}[htbp]
    \centering
    \includegraphics[width=0.45\linewidth]{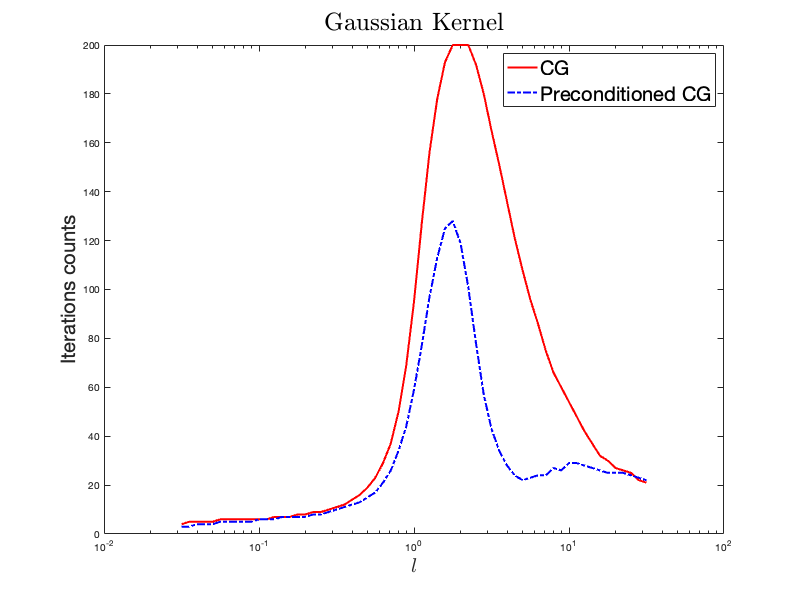}
    \includegraphics[width=0.45\linewidth]{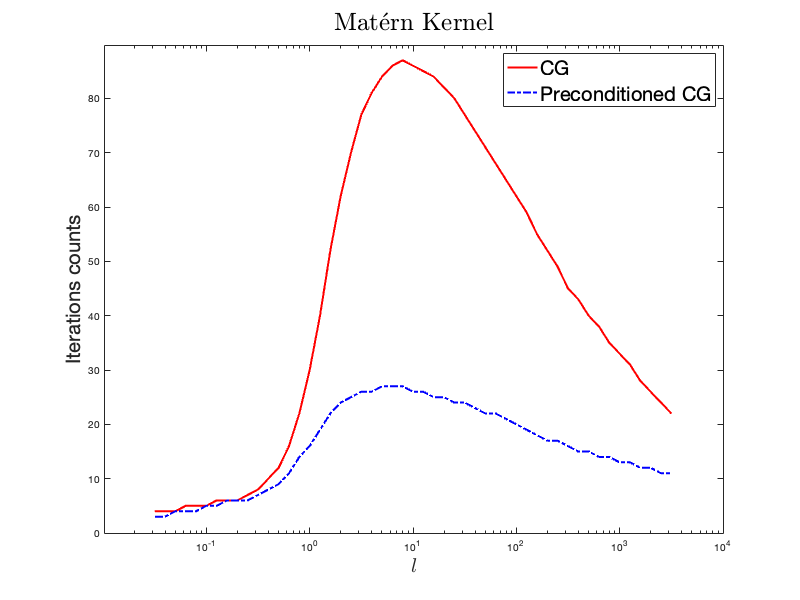}
    \caption{
    Comparison of iteration counts for CG vs \texttt{AAFN} preconditioned CG to reach $10^{-4}$ relative residual tolerance, as a function of length scale $\ell$. Plotted for Gaussian and Mat\'ern\texorpdfstring{\((\frac{1}{2})\)}{} kernels using a synthetic $\mathbb{R}^6$ dataset formed with $3000$ points sampled uniformly at random within a hypercube of side length $\sqrt[3]{3000}$. The right-hand side vector elements were sampled uniformly from $[-0.5, 0.5]$, using a zero initial guess.}
    %We test with parameters $\sigma_f=1.0$, $\sigma_\varepsilon^2=0.01$. 
    \label{fig:aafn-cg}
\end{figure}

In the next set of experiments, we demonstrate that the \texttt{AAFN} preconditioner can eventually yield more accurate $\tilde{Z}(\bm{\theta})$ in \eqref{eq:GP_approx_objective} and its gradient in \eqref{eq:div}. The dataset $\mathcal{X} \subset \mathbb{R}^6$ was generated with $3000$ points, where each coordinate of $\bm{x} \in \mathcal{X}$ was sampled independently and uniformly from $[0,1]$. Labels $\bm{y}_i$ for each point $\bm{x}_i$ were generated as $\bm{y}_i = \sin(2\pi\bm{x}_i)^\intercal \exp(\bm{x}_i) + \|\bm{x}_i\|_2^2 + \varepsilon_i$, where $\sin(\cdot)$ and $\exp(\cdot)$ were applied element-wise, and $\varepsilon_i \sim \mathcal{N}(0, 0.01)$. %which rely on accurate solves %and terms like $\log(\det(M))$ and $\operatorname{tr}(M^{-1}\frac{\partial M}{\partial \theta_j})$. 
The window was again set to be $[[1,2,3],[4,5,6]]$

We modeled this data using GPs with a Gaussian kernel, setting $\sigma_f^2=\frac{1}{P}$, $\sigma_\varepsilon^2=1.0$, and $\ell=2.0$ to ensure a ``middle rank'' kernel matrix. 
%We estimated $Z(\bm{\theta})$ and its gradient $\partial Z / \partial \ell$ using the strategy described in Section~\ref{sec:intro}. 
We used $5$ vectors in SLQ and Hutchinson trace estimator, and compared the mean and variance of the unpreconditioned version with the \texttt{AAFN} preconditioned one. We repeated experiments with iteration counts from $1$ to $10$. The \texttt{AAFN} preconditioner was configured with a maximum rank of $100$ and a maximum Schur complement fill level of $100$.
%We estimated $\tilde{Z}(\bm{\theta})$ and its gradient $\partial \tilde{Z} / \partial \ell$ with $5$ vectors and ran CG and \texttt{AAFN} preconditioned CG for $1$ to $10$ iterations. The \texttt{AAFN} preconditioner was configured with a maximum rank of $100$ and a maximum Schur complement fill level of $100$.

Figure~\ref{fig:aafn-loss} displays the mean estimate and $95\%$ confidence interval for  $\tilde{Z}(\bm{\theta})$ and its gradient $\partial \tilde{Z}/\partial \ell$ for different iteration counts. The results demonstrate that \texttt{AAFN} can significantly reduce the variance of the SLQ used in estimating the loss and stochastic trace estimation used in estimating its derivative.

\begin{figure}[htbp]
    \centering
    \includegraphics[width=0.45\linewidth]{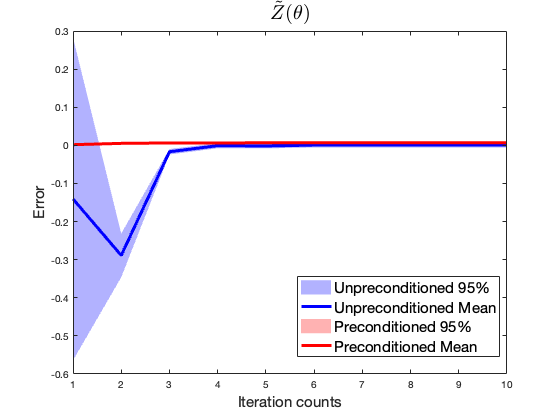}
    \includegraphics[width=0.45\linewidth]{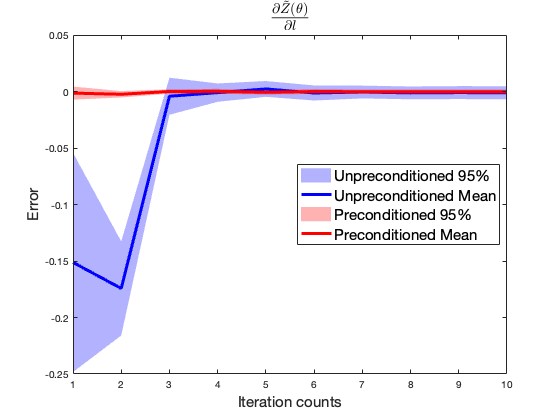}
    \caption{Comparison of the mean estimate and $95\%$ confidence interval for $\tilde{Z}(\bm{\theta})$ and its gradient $\partial \tilde{Z}/\partial \ell$ for different iteration counts. Plotted for Gaussian kernel using a synthetic $\mathbb{R}^6$ dataset formed with $3000$ points sampled uniformly from $[0,1]^6$. Labels $\bm{y}_i$ for each point $\bm{x}_i$ were generated as $\bm{y}_i = \sin(2\pi\bm{x}_i)^\intercal \exp(\bm{x}_i) + \|\bm{x}_i\|_2^2 + \varepsilon_i$, where $\sin(\cdot)$ and $\exp(\cdot)$ are applied element-wise, and $\varepsilon_i \sim \mathcal{N}(0, 0.01)$.}
    \label{fig:aafn-loss}
\end{figure}

\subsection{NFFT-accelerated Additive Kernel} \label{subsec:GP_framework} In this section, we compare the performance of four frameworks on both synthetic and real datasets: 1) GPs with one single kernel; 2) GPs with the additive kernel; 3) the SVGP approach  \cite{hensman2013gaussian,allison2023leveraging}; and 4)  GPs with the NFFT-accelerated additive kernel.
We choose to include SVGP as a baseline because it is one of the most widely used inducing point methods and is known to be very efficient for large problems. However, SVGP uses an approximated model and thus fails to provide accurate uncertainty compared to the exact GP model.
For the dataset containing more than $300,000$ data points in $\mathbb{R}^2$ or $\mathbb{R}^3$, we utilize a high-accuracy approximate matrix-vector multiplication method as detailed in \cite{huang2020h2pack,huang2025higp}, which serves as an alternative to exact matrix-vector multiplication for the comparison to the NFFT-accelerated method. In all GP experiments, we employ the Adam optimizer with a learning rate $0.01$ and a maximum iteration $500$ to train the hyperparameters.
Unless otherwise specified, our default settings are as follows: $10$ iterations with $10$ vectors for SLQ and stochastic trace estimation, $10$ CG iterations for training, $50$ CG iterations for prediction, and a fixed number of $10$ as the number of landmark points chosen for each sub-kernel in the \texttt{AAFN} preconditioner.
To ensure the positivity of all hyperparameters, we train them in $\mathbb{R}$ and apply the softplus function to transform them into the actual hyperparameters used in the kernels.  
Our initial guess for all three hyperparameters (before transformation) is zero.

In the first experiment, we evaluate the performance with a one-dimensional dataset without using the additive kernel to show that the NFFT-accelerated kernel operations can yield competitive prediction performance compared to the exact GPs.
We randomly sampled $1000$ points in $[0,1]$ and generated labels using a Gaussian Random Field with zero mean and Gaussian kernel covariance matrix with $\sigma_f^2=\frac{1}{P}$, $\ell=0.1$, and $\sigma_\varepsilon^2=0.01$.
Then, we randomly selected $800$ points for training and $200$ points for testing. The results for different model/method pairs are reported in Figure~\ref{fig:test2}. The results show that despite using approximate kernel operations, the loss curves and predictions remain consistent with those obtained with exact kernel operations for both Gaussian kernel and Mat\'ern\texorpdfstring{\((\frac{1}{2})\)}{} kernel.

\begin{figure}
    \centering
    \includegraphics[width=\textwidth]{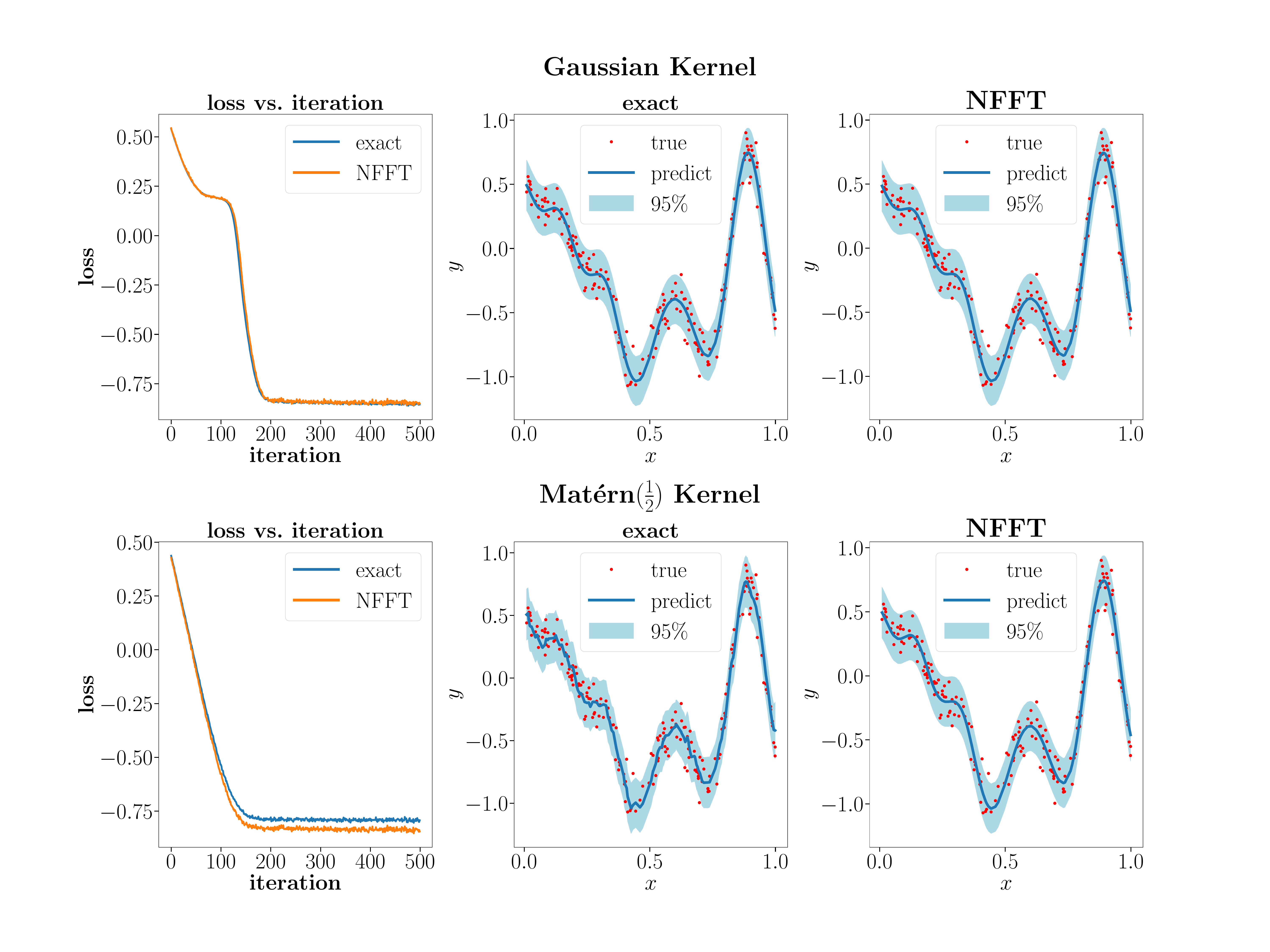}
    \caption{Comparision between NFFT-accelerated GPs and  GPs with exact matrix operations on a one-dimensional synthetic dataset. The labels are generated using a Gaussian Random Field. 
    %for both Gaussian kernel and Mat\'ern\texorpdfstring{\((\frac{1}{2})\)}{} kernel.
    }
    \label{fig:test2}
\end{figure}

Next, we test the performance of GPs with an additive kernel on a high-dimensional synthetic dataset.
Specifically, we randomly sampled $3000$ points in $\mathbb{R}^{20}$ and used the first six features to generate labels using a Gaussian Random Field with zero mean and Gaussian kernel covariance matrix, employing parameters $\sigma_f^2=\frac{1}{P}$, $\ell=1.0$, and $\sigma_\varepsilon^2=0.0001$. We employed the feature grouping technique EN, using $1000$ subsample points, with an $L_1$ regularization parameter set to $0.01$ and dimensionality $d=9$. This window selection process successfully identifies the correct features to form the feature windows $\mathcal{W}=\left[\left[6,4,5\right],\left[3,2,1\right]\right]$.
We then randomly selected $2400$ points for training and $600$ points for testing, and benchmarked our NFFT-accelerated GPs against the GPs using the exact additive kernel.

\begin{figure}
    \centering
    \includegraphics[width=\textwidth]{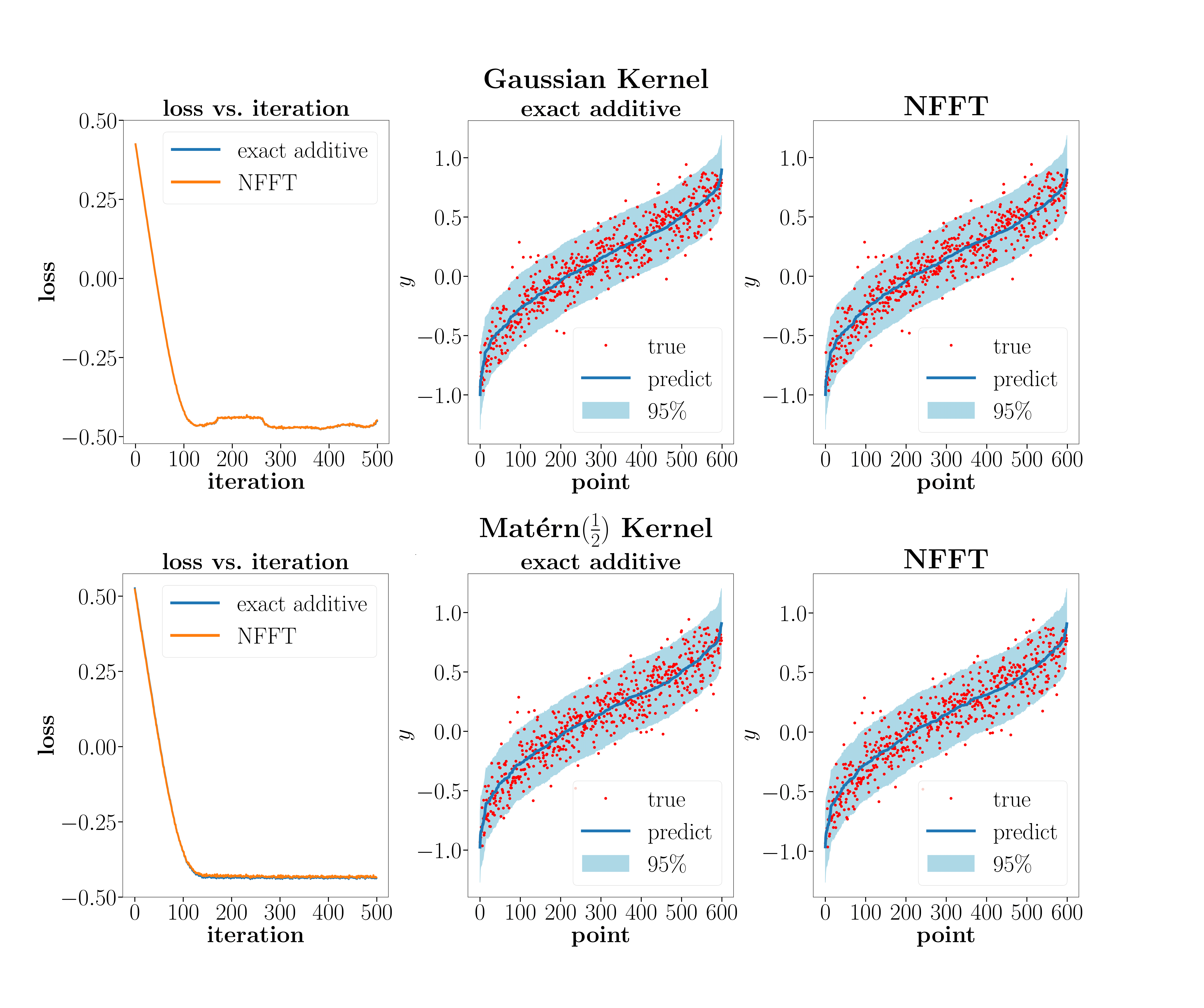}
    \caption{Comparison between GPs with the NFFT-accelerated additive kernel and GPs with the exact additive kernel on a high-dimensional synthetic dataset. Labels are generated using a Gaussian Random Field based on the first six features.} %exact kernel operators and exact additive kernel operators.
    %for both Gaussian kernel and Mat\'ern\texorpdfstring{\((\frac{1}{2})\)}{} kernel. 
    %Here we sort the points by predicted value in each plot for clearer visualization of the 95\% confidence interval.} 
    \label{fig:test3}
\end{figure}

The results are plotted in Figure \ref{fig:test3}. 
In each plot, we plot the predictions as well as the 95\% confidence interval. 
The figure demonstrates that the additive kernel, after hyperparameter optimization, effectively captures the underlying pattern of the data with both kernels. 
The loss curves for the exact and NFFT-accelerated additive kernels closely align in the figure. 
%The final root mean square error (RMSE) values for both methods are also very close.
We also tested the exact GPs with one single kernel. For the GPs based on one single kernel, the root mean square error (RMSE) values are $0.08$ and $0.12$ for the Gaussian kernel and Mat\'ern\texorpdfstring{\((\frac{1}{2})\)}{} kernel, respectively. 
For the GPs based on additive kernels, the final RMSEs  for both methods are close.
The RMSEs are $0.14$ and $0.15$ for the two kernels. These results reveal that GPs equipped with an additive kernel perform comparably to those using a single, exact kernel. Additionally, GPs utilizing a NFFT-accelerated additive kernel maintain accuracy similar to that of the exact additive kernel.

Finally, we ran large tests on some real datasets listed in Table \ref{tab:GP_final}. These datasets are from the UCI repository \cite{asuncion2007uci} and are widely used in GPs benchmarking, including the low-dimensional dataset \texttt{road3d} and the high-dimensional datasets \texttt{poletele}, \texttt{bike}, and \texttt{elevators}. The data are preprocessed as described in \cite{allison2023leveraging} except for \texttt{elevators}, which is downloaded from \texttt{GPyTorch} \cite{gardner2018gpytorch} GitHub repository. First, we analyze the impact of dimensionality reduction induced by the additive kernel structure on the GP model's performance. For this, we performed the NFFT-accelerated GP model with MIS feature grouping at feature ratios $d_{\text{ratio}}=\tfrac{1}{3}$, $d_{\text{ratio}}=\tfrac{2}{3}$, $d_{\text{ratio}}=1$ and compared its performance to that of the exact GP model utilizing a single kernel. The corresponding feature windows are given in Table~\ref{tab:GP_dratio_windows}. Due to the usage of a different kernel structure (additive vs.\ non-additive), a direct comparison of the negative log marginal likelihood with different methods is not meaningful. Therefore, we focus on reporting the RMSEs which are summarized in Table~\ref{tab:GP_dratio}.

\if 0
\begin{table}[!ht]
	\adjustbox{max width=\textwidth}{
	\centering
	\begin{tabular}{|c||c|c|c|c||c|c|c|c|}
		\hline
		\multirow{2}{*}{Dataset} & 
		\multicolumn{4}{c||}{Gaussian Kernel} &
		\multicolumn{4}{c|}{Mat\' ern\texorpdfstring{\((\frac{1}{2})\)}{} Kernel} \\
		\cline{2-9}
		& $\tfrac{1}{3}$ & $\tfrac{2}{3}$ & $1$ & exact full & $\tfrac{1}{3}$ & $\tfrac{2}{3}$ & $1$ & exact full \\ \hline \hline
		\texttt{bike} & $0.61$\tiny{$\pm3$e-$4$} & $\bm{0.56}$\tiny{$\pm2$e-$4$} & $0.60$\tiny{$\pm9$e-$5$} & $0.59$\tiny{$\pm3$e-$5$} & $0.61$\tiny{$\pm2$e-$5$} & $\bm{0.56}$\tiny{$\pm2$e-$5$} & $0.58$\tiny{$\pm5$e-$5$} & $0.57$\tiny{$\pm5$e-$6$} \\ \hline
		\texttt{elevators} & $0.19$\tiny{$\pm8$e-$5$} & $0.12$\tiny{$\pm1$e-$4$} & $0.12$\tiny{$\pm4$e-$5$} & $\bm{0.09}$\tiny{$\pm4$e-$6$} & $0.19$\tiny{$\pm2$e-$5$} & $0.15$\tiny{$\pm0.012$} & $0.15$\tiny{$\pm0.014$} & $\bm{0.09}$\tiny{$\pm4$e-$7$} \\ \hline
		\texttt{poletele} & $0.29$\tiny{$\pm0.001$} & $0.28$\tiny{$\pm0.001$} & $0.29$\tiny{$\pm0.002$} & $\bm{0.18}$\tiny{$\pm3$e-$5$} & $0.21$\tiny{$\pm9$e-$5$} & $0.21$\tiny{$\pm2$e-$5$} & $0.22$\tiny{$\pm2$e-$4$} & $\bm{0.16}$\tiny{$\pm1$e-$6$} \\ \hline
	\end{tabular}}
	\caption{Comparison of RMSEs for GP models with the NFFT-accelerated additive kernel using MIS feature grouping with dimension ratios \(d_{\text{ratio}} = \frac{1}{3}\), \(d_{\text{ratio}} = \frac{2}{3}\), and \(d_{\text{ratio}} = 1\), versus exact GP model using one single kernel. Results show mean and standard deviation from three trials.
	}
	\label{tab:GP_dratio}
\end{table}

\fi

\begin{table}[!ht]
	\centering
	\adjustbox{max width=0.8\textwidth}{
		\centering
		\begin{tabular}{|c|c||c|}
		\hline
		Dataset & $d_{\text{ratio}}$ & $\mathcal{W}$ \\ \hline \hline
		\multirow{3}{*}{\texttt{bike}} & $\tfrac{1}{3}$ & $[[2,7,9],[4,10]]$ \\
		& $\tfrac{2}{3}$ & $[[2,7,9],[4,10,5],[8,3,6]]$ \\
		& $1$ & $[[2,7,9],[4,10,5],[8,3,6],[1,11,12],[13]]$ \\ \hline
		\multirow{3}{*}{\texttt{elevators}} & $\tfrac{1}{3}$ & $[[10,11,12],[13,18,6]]$ \\
		& $\tfrac{2}{3}$ & $[[10,11,12],[13,18,6],[4,2,9],[3,8,16]]$ \\
		& $1$ & $[[10,11,12],[13,18,6],[4,2,9],[3,8,16],[7,5,14],[15,17,1]]$ \\ \hline
		\multirow{3}{*}{\texttt{poletele}} & $\tfrac{1}{3}$ & $[[1,2,4],[7,19,17],[3]]$ \\
		& $\tfrac{2}{3}$ & $[[1,2,4],[7,19,17],[3,12,15],[10,5,8],[14]]$ \\
		& $1$ & $[[1,2,4],[7,19,17],[3,12,15],[10,5,8],[14,9,18],[13,6,16],[11]]$ \\ \hline
	\end{tabular}}
	\caption{Feature windows $\mathcal{W}$ obtained from MIS feature grouping at different feature ratios $d_{\text{ratio}}$ for the GP models with the NFFT-accelerated  additive kernel.}
	\label{tab:GP_dratio_windows}
\end{table}

\begin{table}[!ht]
	%\adjustbox{max width=\textwidth}{
	\centering
	\begin{tabular}{|c||c|c|c|c||c|c|c|c|}
		\hline
		\multirow{2}{*}{Dataset} & 
		\multicolumn{4}{c||}{Gaussian Kernel} &
		\multicolumn{4}{c|}{Mat\' ern\texorpdfstring{\((\frac{1}{2})\)}{} Kernel} \\
		\cline{2-9}
		& $\tfrac{1}{3}$ & $\tfrac{2}{3}$ & $1$ & exact GPs & $\tfrac{1}{3}$ & $\tfrac{2}{3}$ & $1$ & exact GPs \\ \hline \hline
		\texttt{bike} & $0.61$ & $\bm{0.56}$ & $0.60$ & $0.59$ & $0.61$ & $\bm{0.56}$ & $0.58$ & $0.57$ \\ \hline
		\texttt{elevators} & $0.19$ & $0.12$ & $0.12$ & $\bm{0.09}$ & $0.19$ & $0.15$ & $0.15$ & $\bm{0.09}$ \\ \hline
		\texttt{poletele} & $0.29$ & $0.28$ & $0.29$ & $\bm{0.18}$ & $0.21$ & $0.21$ & $0.22$ & $\bm{0.16}$ \\ \hline
	\end{tabular}
    %}
	\caption{Comparison of the RMSE obtained from GP models with the NFFT-accelerated additive kernel using different MIS feature grouping ratios, against the exact GP model utilizing a single kernel.
	}
	\label{tab:GP_dratio}
\end{table}

By setting \(d_{\text{ratio}} < 1\), we effectively reduce the number of features used for prediction and consequently, the number of sub-kernels, which significantly speeds up computations involving the additive kernel. For the \texttt{poletele} dataset, employing ratios \(d_{\text{ratio}} = \tfrac{1}{3}\) and \(d_{\text{ratio}} = \tfrac{2}{3}\) yields similar RMSEs across both kernels, albeit higher than those obtained with the exact GP model with one single kernel. The \texttt{elevators} dataset shows that reducing dimensionality to \(d_{\text{ratio}} = \tfrac{2}{3}\) allows the RMSE of the additive Fourier-accelerated model to remain consistent, though still surpassing the RMSE of the exact GP model. Interestingly, in the \texttt{bike} dataset, while a \(d_{\text{ratio}} = \tfrac{1}{3}\) increases the RMSE compared to a full feature set (\(d_{\text{ratio}} = 1\)), a setting of \(d_{\text{ratio}} = \tfrac{2}{3}\) not only improves computational efficiency but also achieves the lowest RMSE, outperforming the exact GP model with one single kernel. Overall, while dimensionality reduction generally results in higher RMSEs compared to models using one single kernel, strategic settings of \(d_{\text{ratio}}\) can maintain or even enhance model performance.

We then alternatively employ the feature grouping technique EN next, define a target number of features $d_{\text{EN}}=9$ for the GPs with the NFFT-accelerated additive kernel, and compare their performance to exact GPs with one single kernel and the SVGP model's performance~\cite{allison2023leveraging}. The results are reported in Table~\ref{tab:GP_final}. Here, G refers to the Gaussian kernel and M to the Mat\'ern\texorpdfstring{\((\frac{1}{2})\)}{} kernel. The SVGP results are taken from the average values in \cite{allison2023leveraging}. The dataset \texttt{elevators} is not reported for SVGP in \cite{allison2023leveraging}. For \texttt{road3d}, as the dataset is very large, we use a high accuracy approximation to exact GPs available in \texttt{HiGP} \cite{huang2025higp} as discussed above. The derivative of the Mat\'ern\texorpdfstring{\((\frac{1}{2})\)}{} kernel is not supported in \texttt{HiGP} so we are unable to generate results for \texttt{road3d}. When employing EN feature grouping, the number of features incorporated into the additive model not only depends on the target number of features but also on the regularization parameter $\lambda_{\text{EN}}$. Thus, $d_{\text{EN}}$ is the target number of features but it does not always match the actual number of features in $\mathcal{W}$, since features with values below a certain tolerance are consistently excluded. %\textcolor{red}{Yuanzhe: The last sentence is not clear. When you set the target number to be 9, why can you get a different number of features to keep?}

\if 0
\begin{table}[!ht]
	\adjustbox{max width=\textwidth}{
	\centering
	\begin{tabular}{|c||ccc|c|cc|cc|}
		\hline
		Dataset &
		\multicolumn{1}{c}{$n$} &
		\multicolumn{1}{c}{$p$} &
		\multicolumn{1}{c|}{$\mathcal{W}$} &
		\multicolumn{1}{c|}{SVGP G} &
		\multicolumn{1}{c}{Exact G} &
		\multicolumn{1}{c|}{Exact M} &
		\multicolumn{1}{c}{Additive G} &
		\multicolumn{1}{c|}{Additive M} \\
		\hline \hline
		\texttt{bike} &
		$13034$ &
		$13$ & $[[2,9,7],[10,5,3],[6,4,12]]$ &
		\multicolumn{1}{c|}{$0.61$\tiny{$\pm0.003$}} &
		\multicolumn{1}{c}{$0.59$\tiny{$\pm3$e-$5$}} &
		\multicolumn{1}{c|}{$0.57$\tiny{$\pm5$e-$6$}} &
		\multicolumn{1}{c}{$0.63$\tiny{$\pm6$e-$4$}} &
		\multicolumn{1}{c|}{$\bm{0.54}$\tiny{$\pm9$e-$6$}} \\
		\hline
		\texttt{elevators} &
		$13279$ &
		$18$ & $[[10,8,13],[18,1,12],[11,3,4]]$ &
		\multicolumn{1}{c|}{-} &
		\multicolumn{1}{c}{$0.09$\tiny{$\pm4$e-$6$}} &
		\multicolumn{1}{c|}{$\bm{0.09}$\tiny{$\pm4$e-$7$}} &
		\multicolumn{1}{c}{$0.16$\tiny{$\pm2$e-$5$}} &
		\multicolumn{1}{c|}{$0.16$\tiny{$\pm8$e-$8$}} \\
		\hline
		\texttt{poletele} &
		$4406$ &
		$19$ & $[[1,2,4],[3,5]]$ &
		\multicolumn{1}{c|}{$0.23$\tiny{$\pm0.006$}} &
		\multicolumn{1}{c}{$0.18$\tiny{$\pm3$e-$5$}} &
		\multicolumn{1}{c|}{$0.16$\tiny{$\pm1$e-$6$}} &
		\multicolumn{1}{c}{$0.15$\tiny{$\pm0.002$}} &
		\multicolumn{1}{c|}{$\bm{0.13}$\tiny{$\pm2$e-$5$}} \\
		\hline
		\texttt{road3d} &
		$326155$ &
		$2$ & - &
		\multicolumn{1}{c|}{$\bm{0.44}$\tiny{$\pm0.008$}} &
		\multicolumn{1}{c}{$0.69$\tiny{$\pm0.00$}} &
		\multicolumn{1}{c|}{-} &
		\multicolumn{1}{c}{$0.68$\tiny{$\pm1$e-$4$}} &
		\multicolumn{1}{c|}{$0.62$\tiny{$\pm4$e-$6$}} \\
		\hline
	\end{tabular}}
	\caption{Comparison between different methods (mean and standard deviation of three runs). The feature grouping in the additive Fourier-accelerated GP models is determined via EN, with a target number $d_{\text{EN}}=9$ of features and $\lambda_{\text{EN}}=0.01$.
G represents Gaussian kernel and M represents Mat\'ern\texorpdfstring{\((\frac{1}{2})\)}{} kernel. 
SVGP results are taken from the average values in \cite{allison2023leveraging}.
The dataset \texttt{elevators} is not reported for SVGP in \cite{allison2023leveraging}.
Derivative of Mat\'ern\texorpdfstring{\((\frac{1}{2})\)}{} kernel is not implemented in \texttt{HiGP} \cite{huang2025higp} so we are unable to generate results for \texttt{road3d}.
%Due to the large size of the \texttt{road3d}, the \texttt{HiGP} test for the Gaussian kernel was performed only once, and thus no standard deviation is reported.
The windows for these datasets are reported in Figure~\ref{fig:test1}.
	}
	\label{tab:GP_final}
\end{table}
\fi

\begin{table}[!ht]
	\adjustbox{max width=\textwidth}{
	\centering
	\begin{tabular}{|c||ccc|c|cc|cc|}
		\hline
		Dataset &
		\multicolumn{1}{c}{$n$} &
		\multicolumn{1}{c}{$p$} &
		\multicolumn{1}{c|}{$\mathcal{W}$} &
		\multicolumn{1}{c|}{SVGP G} &
		\multicolumn{1}{c}{Exact G} &
		\multicolumn{1}{c|}{Exact M} &
		\multicolumn{1}{c}{Additive G} &
		\multicolumn{1}{c|}{Additive M} \\
		\hline \hline
		\texttt{bike} &
		$13034$ &
		$13$ & $[[2,9,7],[10,5,3],[6,4,12]]$ &
		\multicolumn{1}{c|}{$0.61$} &
		\multicolumn{1}{c}{$0.59$} &
		\multicolumn{1}{c|}{$0.57$} &
		\multicolumn{1}{c}{$0.63$} &
		\multicolumn{1}{c|}{$\bm{0.54}$} \\
		\hline
		\texttt{elevators} &
		$13279$ &
		$18$ & $[[10,8,13],[18,1,12],[11,3,4]]$ &
		\multicolumn{1}{c|}{-} &
		\multicolumn{1}{c}{$0.09$} &
		\multicolumn{1}{c|}{$\bm{0.09}$} &
		\multicolumn{1}{c}{$0.16$} &
		\multicolumn{1}{c|}{$0.16$} \\
		\hline
		\texttt{poletele} &
		$4406$ &
		$19$ & $[[1,2,4],[3,5]]$ &
		\multicolumn{1}{c|}{$0.23$} &
		\multicolumn{1}{c}{$0.18$} &
		\multicolumn{1}{c|}{$0.16$} &
		\multicolumn{1}{c}{$0.15$} &
		\multicolumn{1}{c|}{$\bm{0.13}$} \\
		\hline
		\texttt{road3d} &
		$326155$ &
		$2$ & - &
		\multicolumn{1}{c|}{$\bm{0.44}$} &
		\multicolumn{1}{c}{$0.69$} &
		\multicolumn{1}{c|}{-} &
		\multicolumn{1}{c}{$0.68$} &
		\multicolumn{1}{c|}{$0.62$} \\
		\hline
	\end{tabular}}
	\caption{Comparison of the RMSE between different methods. The feature grouping in the GP models with the NFFT-accelerated additive kernel is determined via EN, with a target number $d_{\text{EN}}=9$ of features and $\lambda_{\text{EN}}=0.01$.
G represents Gaussian kernel and M represents Mat\'ern\texorpdfstring{\((\frac{1}{2})\)}{} kernel. 
SVGP results are taken from the average values in \cite{allison2023leveraging}.
The dataset \texttt{elevators} is not reported for SVGP in \cite{allison2023leveraging}.
Derivative of Mat\'ern\texorpdfstring{\((\frac{1}{2})\)}{} kernel is not implemented in \texttt{HiGP} \cite{huang2025higp} so the results for \texttt{road3d} are not computed for Exact M.
%Due to the large size of the \texttt{road3d}, the \texttt{HiGP} test for the Gaussian kernel was performed only once, and thus no standard deviation is reported.
%The windows for these datasets are reported in Figure~\ref{fig:test1}.
	}
	\label{tab:GP_final}
\end{table}

The results presented in Table~\ref{tab:GP_final} demonstrate that the GP models with the NFFT-accelerated additive kernel achieve comparable performance to other methods while only relying on a much smaller number of features. While for \texttt{poletele}, the additive Fourier-accelerated model with MIS-based feature windows could not keep up with the RMSEs achieved by the exact model as shown in Table~\ref{tab:GP_dratio}, the Fourier-accelerated models based on windows generated via EN feature grouping now yield RMSEs which are smaller than the exact model's and SVGP's. For \texttt{bike}, the EN feature grouping could reduce the RMSEs for the Mat\'ern\texorpdfstring{\((\frac{1}{2})\)}{} kernel. In this experiment, the additive Fourier-accelerated models with the Mat\'ern\texorpdfstring{\((\frac{1}{2})\)}{} kernel achieved smaller RMSEs than with the Gaussian kernel.

\section{Conclusion} \label{sec:Conclusion}
We showed in this paper that Fourier-accelerated additive GP models can achieve similar RMSEs and consistent uncertainty compared to the exact models with exact or additive kernels. Our major focus was on introducing the additive Fourier approach to drastically reduce the computational complexity by inducing dimensionality reduction. We illustrated that the use of feature windows can produce competitive predictions and provided theoretical guidance on how to choose the correct number of windows based on a suitable feature grouping technique. A cornerstone of efficient optimization techniques is the fast and reliable computation of gradients of the objective function. We illustrated that this can also be done with the help of the NFFT-based multiplication, where not only accurate approximations to the kernel vector product are provided, but also the exact gradients to these approximations, which in turn allows us to work with the true gradients. We theoretically underpinned this process by providing rigorous error bounds for the Mat\'ern kernel.  The success of our numerical experiments also depends on the use of an efficient preconditioner, and we showed how the modification of the adaptive factorized Nystr\"om preconditioner greatly benefits the solution of the linear systems as well as the trace estimation procedure. This illustrates that using Fourier acceleration for matrix-vector products and preconditioning within GP training reduces memory consumption, shows greater potential for parallel computations, and provides more robustness towards changes in the hyperparameters during the training process.

In the future, the flexibility of our method should allow for the incorporation of more sophisticated optimization solvers in the training procedure.

\appendix

\section{The FFT for Nonequispaced Data (NFFT)}
\label{sec:nffti}
%-------------------------------------------------
Consider a trigonometric polynomial of the form
$$
f(\bm x)=\sum_{\bm k\in\mathcal I_m} b_{\bm k} \,\e^{2\pi\i\bm k^\intercal\bm x},
$$
which we would like to evaluate in a set of points $\bm x_j\in\mathbb T^d =\mathbb R^d/\mathbb Z^d \simeq[-\tfrac12,\tfrac12)^d$, $j=1,\dots,n$.
While this evaluation can be realized efficiently in $\mathcal O(|\mathcal I_m| \log \mathcal |I_m|)$ operations in case the given data points $\bm x_j$ sit on a regular grid, it is not obvious how this can be generalized to arbitrary data points, which we are regularly confronted with in practical issues.

The basic idea of the NFFT, see \cite{potts2003fast,KeKuPo09} and references therein, is to approximate the given function $f$ as a sum of translates
\begin{equation}\label{eq:nfft_f_approx}
    f(\bm x) \approx \sum_{\bm l\in\mathcal I_{\sigma m}} g_{\bm l} \tilde\varphi\left(\bm x-\tfrac{\bm l}{\sigma m}\right)=: f_\text{nfft}(\bm x),
\end{equation}
where $\tilde\varphi:\mathbb T^d\to\mathbb R$ is 1-periodic function, constructed in terms of the 1-periodic periodization of a window function 
$$
\varphi: \mathbb T^d\to \mathbb R, \quad \text{with } \mathrm{supp}(\varphi)=\left[-\tfrac{s}{\sigma m},\tfrac{s}{\sigma m}\right]  
$$
having a small support. Thus, for a given point $\bm x$ only a few summands in \eqref{eq:nfft_f_approx} will be non-zero, so that it can be evaluated within a small number of arithmetic operations.
The parameter $s\in\mathbb N$, $s\ll\sigma m$ is called the support parameter of the window function and $\sigma\geq1$ is an oversampling factor.
The (up to now unknown) coefficients $g_{\bm l}$ in \eqref{eq:nfft_f_approx} have to be set depending on the given Fourier coefficients $b_{\bm k}$ and the chosen window function $\varphi$. This can be done as follows.

Note that the right-hand side in \eqref{eq:nfft_f_approx} is a type of a cyclic convolution. Thus, computing the Fourier coefficients of the functions on both sides of the equation gives
$$
b_{\bm k} \approx \text{FFT}([g_{\bm l}]_{\bm l\in\mathcal I_m})_{\bm k}\cdot c_{\bm k}(\tilde\varphi) \approx \text{FFT}([g_{\bm l}]_{\bm l\in\mathcal I_m})_{\bm k}\cdot c_{\bm k}(\varphi),
$$
where we assume that the window function $\varphi$ is also well-localized in the frequency domain and, thus, based on the aliasing formula we have $c_{\bm k}(\tilde\varphi)\approx c_{\bm k}(\varphi)$.
Finally, the NFFT workflow can be summarized as follows.
\begin{enumerate}
    \item Compute
    $$
    \tilde b_{\bm k}:=
    \begin{cases}
        b_{\bm k}\cdot c_{\bm k}(\varphi)^{-1} &: \bm k\in\mathcal I_m, \\
        0 &: \bm k\in \mathcal I_{\sigma m}\setminus \mathcal I_m.
    \end{cases}
    %\qquad\qquad \qquad \mathcal O(|\mathcal I_{\sigma m}|)
    $$
    \item Compute the coefficients $g_{\bm l}$, $\bm l\in\mathcal I_{\sigma m}$, by applying an (ordinary) FFT to the coefficients $\tilde b_{\bm k}$, $\bm k\in\mathcal I_{\sigma m}$.
    %\hfill $\mathcal O(|\mathcal I_{\sigma m}|\log |\mathcal I_{\sigma m}|)$.
    \item Evaluate the sparse sums \eqref{eq:nfft_f_approx} for all given data points $\bm x_j$, $j=1,\dots,n$.
    %\hill $\mathcal O((2s+1)^d n)$
\end{enumerate}
As can be seen from the above explanations, the NFFT is an approximate algorithm, that is, the given trigonometric polynomial is not exactly evaluated.
The resulting approximation error heavily depends on the given Fourier coefficients, the applied window function and also on the parameters $s$ and $\sigma$.
However, by choosing the last two mentioned parameters large enough, the NFFT can be tuned to an arbitrary precision.

Within the NFFT software library, a number of window functions are implemented, confer \cite{KeKuPo09}. The default window function used is the so-called Kaiser-Bessel window function
$$
\varphi(x)\coloneqq \frac1\pi
\begin{cases}
    \dfrac{\sinh\left(\pi(2-\frac1\sigma) \sqrt{s^2-\sigma^2m^2x^2} \right)}{\sqrt{s^2-\sigma^2m^2x^2}}
    &: |x|\leq\frac{s}{\sigma m}, \\
    \dfrac{\sin\left(\pi(2-\frac1\sigma) \sqrt{\sigma^2m^2x^2-s^2} \right)}{\sqrt{\sigma^2m^2x^2-s^2}}
    &: \text{else},
\end{cases}
$$
where the second part is truncated, that is, the window is restricted to $[-\frac{s}{\sigma m},\frac{s}{\sigma m}]$, as explained above.
The Fourier coefficients of the periodized window $c_k(\tilde\varphi)$ are known explicitly in terms of the modified zero-order Bessel function.
The stated window function is the univariate version, i.e., $d=1$.
In higher dimensions, a tensor product approach is applied, where the multivariate window function is simply obtained by multiplying $d$ univariate functions. Thus, also the Fourier coefficients possess such a tensor product structure. 

In $d=1$ dimension, the approximation error can be estimated by \cite{plonka2018numerical}
\begin{equation}\label{eq:nfft_error}
    |f(x_j)-f_\text{nfft}(x_j)| \leq \|\bm b\|_1 \cdot
    4\pi (s+ \sqrt s)\sqrt[4]{1-\frac1\sigma} \e^{-2\pi s\sqrt{1-1/\sigma}},
\end{equation}
where the oversampling factor $\sigma$ is assumed to be $>1$ and by $\bm b$ we denote the vector containing the given Fourier coefficients $b_{\bm k}$.
We can see that the error decreases exponentially with the growing support parameter $s$.
Following the tensor product approach in higher dimensions, this estimate can be generalized accordingly.
Although there is no error estimate for the $d$-dimensional Kaiser-Bessel window available in the literature, results for other window functions \cite{Elbel1989} show that the error behaves the same way as in the 1-dimensional setting. The constant then grows like $d\cdot2^{d-1}$, that is, for $d=3$ only moderate differences are expected compared to the 1D estimation.
%Usually, a rather conservative choice of the ovresampling factor $\sigma=2$ is applied. In our implementation, the window width parameter $s$ is equal to $2$ (rough), $4$ (default) or $7$ (fine).
%\textcolor{blue}{F. todo: more to higher dimensional estimate, insert literature}

\section{Proof of Lemma~\ref{lemma1} \label{lemma1proof}}
%-------------------------------------------------
First, we consider the univariate case with $\kappa^\text{m}_1(r):=\e^{-|r|/\ell}$, $x\in\mathbb R$.
Due to the fact that this function is decreasing with $|r|\to\infty$, the maximum error is obtained for $r=\pm \frac 12$, see also Figure~\ref{fig:1periodization}, which we can estimate by
\begin{align}
    \max_{r\in[-\frac12,\frac 12]} \left| \kappa^\text{m}_1(r)-\tilde \kappa^\text{m}_1(r)\right|
    &= \sum_{n\in\mathbb Z\setminus\{0\}} \e^{-\left|\frac12+n\right|/\ell}
    = \e^{-1/(2\ell)}+2\sum_{n=1}^\infty \e^{-(2n+1)/(2\ell)} \notag \\
    &\leq \e^{-1/(2\ell)}+2\,\e^{-1/(2\ell)}\int_0^\infty \e^{-x/\ell}\,\mathrm dx
    = \e^{-1/(2\ell)}(1+2\ell). \label{eq:error_f_ftilde_1d}
\end{align}
In the trivariate case, we use $\|\bm r\|_2\geq \frac1{\sqrt3}\|\bm r\|_1=\frac1{\sqrt3}(|r_1|+|r_2|+|r_3|)$ and obtain
\begin{align}
    |\kappa^\text{m}(\bm r)-\tilde \kappa^\text{m}(\bm r)|
    &= \sum_{\bm n\in\mathbb Z^3\setminus\{\bm 0\}} \e^{-\|\bm r+\bm n\|_2/\ell} \notag\\
    &\leq \sum_{\bm n\in\mathbb Z^3\setminus\{\bm 0\}} \e^{-\|\bm r+\bm n\|_1/(\sqrt3\ell)}
    =\left[\prod_{j=1}^3 \left(\sum_{n\in\mathbb Z} \e^{-|r_j+n|/(\sqrt3\ell)}\right)\right] - \e^{-\|\bm r\|_1/(\sqrt3\ell)}. \notag
\end{align}
By using \eqref{eq:error_f_ftilde_1d}, we obtain the estimate
\begin{equation*}
    \left| \kappa^\text{m}(\bm r)-\tilde\kappa^\text{m}(\bm r)\right|
    \leq \left[\prod_{j=1}^3\left( \e^{-|r_j|/(\sqrt3\ell)} + \e^{-1/(2\sqrt3\ell)}(1+2\sqrt3\ell) \right)\right] - \e^{-\|\bm r\|_1/(\sqrt3\ell)},
\end{equation*}
where the term $\e^{-\|\bm r\|_1/(\sqrt3\ell)}$ vanishes after splitting up the product into its single additive components.
After that, we obtain the assertion $|\kappa^\text{m}(\bm r)-\tilde \kappa^\text{m}(\bm r)|\leq\delta^\text{m}(\ell)$ by estimating $\e^{-|r_j|/(\sqrt3\ell)}\leq 1$. 

%In order to show that the difference of the two Fourier approximations becomes negligible for small $\ell$ we define the short hand notation $\tau(\cdot)\coloneqq(\kappa^\text{m}-\tilde\kappa^\text{m})(\cdot)$ and observe \textcolor{blue}{F.: This error estimate is simple and can be helpful but it grows with the size of the frequency grid, which is not so nice.}
%$$
%\left|\sum_{\bm k\in\mathcal I_m} b_{\bm k}(\tau)\e^{2\pi\i\bm k^\intercal\bm x}\right| \leq 
%\sqrt{|\mathcal I_m|}\sqrt{\sum_{\bm k\in\mathcal I_m}|b_{\bm k}(\tau)|^2}
%\leq 
%\sqrt{|\mathcal I_m|}\sqrt{\sum_{\bm k\in\mathbb Z^3}|c_{\bm k}(\tau)|^2}
%=\sqrt{|\mathcal I_m|} \cdot\|\tau\|_{L_2} \leq |\mathcal I_m| \delta^\text{m}(\ell),
%$$
%where we make use of the well known aliasing formula \textcolor{blue}{only holfs if FC are abs. summable (they are!)}
%$$
%b_{\bm k}(\tau) = \sum_{\bm r\in\mathbb Z^3} c_{\bm k+m\bm r}(\tau).
%$$
%and Parseval's identity stating that the $L_2$-norm of a function is the sum over the squared absolute values of its analytic Fourier coefficients.
    
\section{Proof of Lemma~\ref{lemma2} \label{lemma2proof}}
%-------------------------------------------------
The proof follows a similar line as the proof of Lemma~\ref{lemma1}.
First, we consider the univariate case with $\kappa_1(r)\coloneqq \frac{|r|}{\ell} \e^{-|r|/\ell}$ and note that this function has its global maxima in the points $r=\pm\ell$ with function values $\kappa_1(\pm\ell)=1$.
If these points lie in the interval $(-\frac12,\tfrac12)$, that is, $\ell<\frac12$, the maximum error between $\kappa_1$ and $\tilde\kappa_1$ is again obtained for $r=\pm\frac12$.
Then, we obtain
\begin{align*}
    \max_{r\in[-\frac12,\frac12]} |\kappa_1^\text{derm}(r)-\tilde\kappa_1^\text{derm}(r)|
    & = \frac{1}{\ell^2}\sum_{n\in\mathbb Z\setminus\{0\}} \left|\tfrac12+n\right| \,\e^{-|\frac12+n|/\ell} \\
    & = \frac1{\ell^2}\left(\frac12\e^{-1/(2\ell)}+ 2\sum_{n=1}^\infty \frac{2n+1}{2}\e^{-(2n+1)/(2\ell)}\right) \\
    %& = \frac{1}{\ell^2}\left(\frac{\e^{-1/(2\ell)}}{2}
    %+ \e^{-1/(2\ell)}\sum_{n=1}^\infty \left(2n\e^{-n/\ell}+\e^{-n/%\ell}\right)\right) \\
    & \leq \frac1{\ell^2}\left( \frac{\e^{-1/(2\ell)}}{2} + \e^{-1/(2\ell)} \int_0^\infty (2x+1)\e^{-x/\ell}\dx\right) \\
    & = \frac{\e^{-1/(2\ell)}}{2\ell^2} + \frac{\ell(2\ell+1)\e^{-1/(2\ell)}}{\ell^2}
    = \frac{\e^{-1/(2\ell)}(1+2\ell+4\ell^2)}{2\ell^2}.
\end{align*}
In the trivariate setting, we simply make use of
$$
\|\bm x+\bm n\|_2 \e^{-\|\bm x+\bm n\|_2/\ell}
\leq \|\bm x+\bm n\|_1 \e^{-\|\bm x+\bm n\|_1/(\sqrt3 \ell)}
$$
in order to split up the 3d-dimensional terms into products of univariate terms of the form
$$
a_j = a_j(\ell) = \sum_{n\in\mathbb Z\setminus\{0\}} |x_j+n| \e^{-|x_j+n|/(\sqrt3\ell)}
\leq \frac{\e^{-1/(2\sqrt3\ell)}}{2}(1+2\sqrt3\ell+12\ell^2)
$$
and
$$
b_j = b_j(\ell) = \sum_{n\in\mathbb Z\setminus\{0\}} \e^{-|x_j+n|/(\sqrt3\ell)}
\leq\e^{-1/(2\sqrt3\ell)}(1+2\sqrt3\ell).
$$
Consequently, we have
%$\displaystyle\sum_{\bm n\in\mathbb Z\setminus\{\bm 0\}} \|\bm x+\bm n\|_1\e^{-\|\bm x+\bm n\|_1/(\sqrt3\ell)}=$
\begin{align*}
&\displaystyle\sum_{\bm n\in\mathbb Z\setminus\{\bm 0\}} \|\bm x+\bm n\|_1\e^{-\|\bm x+\bm n\|_1/(\sqrt3\ell)}
= \sum_{\bm n\in\mathbb Z} \|\bm x+\bm n\|_1\e^{-\|\bm x+\bm n\|_1/(\sqrt3\ell)}-\|\bm x\|_1\e^{-\|\bm x\|_1/(\sqrt3\ell)}\\
=& \sum_{\bm n\in\mathbb Z^3}|x_1+n_1|\e^{-\|\bm x\|_1/(\sqrt3\ell)}
+\ldots
+\sum_{\bm n\in\mathbb Z^3}|x_3+n_3|\e^{-\|\bm x\|_1/(\sqrt3\ell)}
-\|\bm x\|_1\e^{-\|\bm x\|_1/(\sqrt3\ell)}
\\
=&\left(|x_1|\e^{-|x_1|/(\sqrt3\ell)}+a_1\right)\!\!
\left(\e^{-|x_2|/(\sqrt3\ell)}+b_2\right)\!\!
\left(\e^{-|x_3|/(\sqrt3\ell)}+b_3\right)
+\ldots+\ldots
-\|\bm x\|_1\e^{-\|\bm x\|_1/(\sqrt3\ell)},
\end{align*}
where the term $\|\bm x\|_1\e^{-\|\bm x\|_1/(\sqrt3\ell)}$ is finally canceled out.
Using $\e^{-|\cdot|/\ell}\leq 1$, $\frac{|\cdot|}{\ell}\e^{-|\cdot|/\ell}\leq1$ and exploiting the underlying symmetry, we obtain the assertion.
\vfill

\end{document}